%% file: main.tex
\documentclass[10pt]{article} 
\usepackage[preprint]{tmlr}

\input{math_commands.tex}

\usepackage{hyperref}
\usepackage{url}

\usepackage{amsthm}
\usepackage{graphicx}         
\usepackage{booktabs}         
\usepackage{siunitx}       
\usepackage{pgfplots}          
\pgfplotsset{compat=1.17}      
\usetikzlibrary{pgfplots.statistics}
\usetikzlibrary{patterns}       
\usepackage{caption}          
\usepackage{subcaption}       

\pgfplotsset{
    main plot style/.style={
        width=0.8\textwidth, 
        height=6cm,          
        grid=major,
        grid style={dashed, gray!30},
        legend pos=outer north east,
        label style={font=\small},
        tick label style={font=\small},
        xlabel style={yshift=-0.5em},
        ylabel style={yshift=0.5em},
    },
    f1 score axis/.style={
        ylabel={F1 Score},
        ymin=0.5, ymax=0.8, 
        yticklabel style={/pgf/number format/fixed, /pgf/number format/precision=2},
    },
    time axis/.style={
        ylabel={Train Time (\si{\second})},
        axis y line*=right, 
        ylabel style={yshift=-1.5em}, 
        ymin=0, 
    }
}

\theoremstyle{plain}
\newtheorem{theorem}{Theorem}[section]
\newtheorem{lemma}[theorem]{Lemma}
\newtheorem{proposition}[theorem]{Proposition}

\theoremstyle{remark}

\usepackage[utf8]{inputenc} 
\usepackage[T1]{fontenc}  
\usepackage{amsfonts}    
\usepackage{nicefrac}      
\usepackage{microtype}     
\usepackage{amsmath} 
\usepackage{amssymb}
\usepackage{algorithm}
\usepackage{algorithmicx}
\usepackage{algpseudocode}
\usepackage{wrapfig}    
\usepackage{multirow}  
\usepackage{makecell}   

\usepackage{booktabs}        
\usepackage{threeparttable}  
\usepackage{caption}  

\title{EdgeMask-DG*: Learning Domain-Invariant Graph Structures via Adversarial Edge Masking}

\author{\name Rishabh Bhattacharya \email rishabh.bhattacharya@research.iiit.ac.in \\
      \addr Machine Learning Lab @ IIIT-H \\
      Hyderabad, India
      \AND
      \name Naresh Manwani \email naresh.manwani@iiit.ac.in \\
      \addr Machine Learning Lab @ IIIT-H \\
      Hyderabad, India
}

\def\month{07}  
\def\year{2025} 

\begin{document}

\maketitle

\begin{abstract}
Structural shifts pose a significant challenge for graph neural networks, as graph topology acts as a covariate that can vary across domains. Existing domain generalization methods rely on fixed structural augmentations or training on globally perturbed graphs, mechanisms that do not pinpoint which specific edges encode domain-invariant information. We argue that domain-invariant structural information is not rigidly tied to a single topology but resides in the consensus across multiple graph structures derived from topology and feature similarity. To capture this, we first propose EdgeMask-DG, a novel min-max algorithm where an edge masker learns to find worst-case continuous masks subject to a sparsity constraint, compelling a task GNN to perform effectively under these adversarial structural perturbations. Building upon this, we introduce EdgeMask-DG*, an extension that applies this adversarial masking principle to an enriched graph. This enriched graph combines the original topology with feature-derived edges, allowing the model to discover invariances even when the original topology is noisy or domain-specific. EdgeMask-DG* is the first to systematically combine adaptive adversarial topology search with feature-enriched graphs. We provide a formal justification for our approach from a robust optimization perspective. We demonstrate that EdgeMask-DG* achieves new state-of-the-art performance on diverse graph domain generalization benchmarks, including citation networks, social networks, and temporal graphs. Notably, on the Cora OOD benchmark, EdgeMask-DG\* lifts the worst-case domain accuracy to {78.0\%}, a {+3.8 pp} improvement over the prior state of the art (74.2\%). The source code for our experiments can be found here: \url{https://anonymous.4open.science/r/TMLR-EAEF/}

\end{abstract}

\section{Introduction}
\label{sec:introduction}

Graph Neural Networks (GNNs) \citep{kipf2016semi, velivckovic2018graph, hamilton2017inductive, xu2018powerful} have revolutionized learning on graph data, driving progress in fields ranging from social network analysis and bioinformatics to recommendation systems \citep{wu2020comprehensive}. However, their remarkable performance often relies on the stringent assumption that training and testing data are drawn from the same distribution (I.I.D.). In many real-world applications, this assumption is violated; data often originates from heterogeneous sources or evolves, leading to distribution shifts that can severely degrade GNN performance \citep{zhou2022graph, zhu2021shift}. Graph Domain Generalization (Graph-DG) aims to tackle this fundamental problem by developing models capable of generalizing from one or more source graph domains to unseen target domains with different underlying distributions.

A particularly challenging yet common scenario in Graph-DG involves {structural distribution shifts} \citep{chen2025graph, wu2022handling}. In tasks like cross-graph node classification (e.g., predicting research areas of papers across different citation networks), the node features and label semantics often remain consistent, while the graph topology can vary drastically. GNNs trained naively via Empirical Risk Minimization (ERM) on source graphs overfit to these domain-specific topological patterns, failing to generalize to target graphs with different structures.

Existing approaches for Graph-DG include designing invariant architectures \citep{li2022out}, employing robust training objectives \citep{wu2022handling}, or utilizing data augmentation \citep{chen2025graph}. Data augmentation methods targeting structural shifts, such as GraphAug \citep{chen2025graph}, often apply fixed heuristic rules (e.g., edge dropping based on node degrees, edge addition based on feature similarity). These static strategies may not optimally identify the domain-invariant structural factors, which can be complex and data-dependent.

To address these limitations, we first introduce the core algorithmic concept of \textbf{EdgeMask-DG}, a novel min-max adversarial framework. In EdgeMask-DG, an ``edge masker'' network learns to generate sparse binary or continuous masks over the edges of a given graph. This masker is trained adversarially against a ``task GNN'' to find masks that maximally disrupt the task GNN's performance, subject to a sparsity constraint on the mask. The task GNN, in turn, must learn to perform its task robustly under these worst-case structural perturbations. This process encourages the task GNN to rely on structural patterns that are inherently resilient to such adversarial masking.

However, relying solely on the original graph topology, even with adaptive masking, can be problematic if the source topology is sparse, noisy, or highly domain-specific. To overcome this, we propose an extension, \textbf{EdgeMask-DG*}, which applies the EdgeMask-DG adversarial masking principle to an {enriched graph representation}. This enriched graph is constructed by augmenting the original edges with new edges derived from node feature similarity, specifically using k-Nearest Neighbours (kNN) and spectral clustering. This allows EdgeMask-DG* to leverage the (often assumed) invariance of the feature distribution ($P(\mathbf{X})$) to discover robust structures that might not be apparent in the original topology alone. The adversarial game then proceeds on this enriched graph: the MaskNet identifies challenging sparse masks over both original and feature-derived edges, and the TaskNet (implemented using a Graph Attention Network, GAT \citep{velivckovic2018graph}) learns to perform under these conditions.

A GAT backbone is well-suited for this task, as its layers can directly incorporate the learned edge masks as edge attributes, directly influencing message passing and attention. This dynamic, learned masking on an enriched graph allows EdgeMask-DG* to adaptively identify and exploit the most reliable structural information for generalization.

Our contributions are:
\begin{itemize}
    \item We propose EdgeMask-DG, a novel min-max adversarial learning framework for Graph-DG that learns domain-invariant edge masks adaptively on a given graph structure.
    \item We introduce EdgeMask-DG*, an extension that applies this adversarial masking principle to an {enriched graph structure}, integrating original topology with feature-derived edges (kNN and spectral clustering) to leverage feature invariance and overcome limitations of the original topology.
    \item We employ a GAT backbone that naturally incorporates the learned continuous edge masks as attributes, enhancing the model's ability to focus on relevant substructures.
    \item We demonstrate through extensive experiments that EdgeMask-DG* achieves new state-of-the-art results on diverse Graph-DG benchmarks, including citation networks (ACM, DBLP, Citation, ArXiv), social networks (Facebook-100, Twitch), and e-commerce graphs (Amazon-Photo, Elliptic), significantly outperforming recent methods.
\end{itemize}


\section{Related Work}
\label{sec:related_work}

\textbf{Graph Domain Generalization (Graph-DG).} Generalizing GNNs to out-of-distribution data is a critical challenge \citep{zhou2022graph, zhu2021shift}. Early work adapted principles like invariant risk minimization \citep{arjovsky2019invariant} and distributionally robust optimization (e.g., EERM \citep{wu2022handling}). Other strategies include disentangling representations \citep{zhu2021graph, yu2023lisa}, filtering spurious correlations via information-theoretic objectives \citep{yang2023isgib}, and unsupervised contrastive learning \citep{zhu2024mario}. On the data augmentation front, methods range from static perturbations like GraphAug \citep{chen2025graph} to more advanced techniques like generating continuous invariant subgraphs (GRM \citep{wang2025grm}) or combining adversarial learning with mixup (TRACI \citep{AAAI_TRACI_2025}). Our method differs by using adversarial edge masking to adaptively learn domain-invariant substructures rather than relying on predefined heuristics or variational objectives.

\textbf{Graph Data Augmentation (GDA).} GDA techniques aim to improve GNN robustness and generalization, often in standard graph learning settings \citep{you2020graph, zhu2021graph}. Common strategies involve node dropping, edge perturbation (random or heuristic addition/deletion), feature masking, and subgraph sampling \citep{feng2020graph}. GraphAug \citep{chen2025graph} specifically targets Graph-DG structural shifts with predefined rules for edge dropping (low-weight) and edge adding (spectral clustering); however, its heuristics are static. Other works explore learnable GDA \citep{zhao2021data}, but typically optimize for source domain performance, which may not guarantee OOD generalization. EdgeMask-DG* uses an adversarial objective to learn domain-invariant structural masks over an enriched graph space.

\textbf{Adversarial Learning on Graphs.} Adversarial techniques are prevalent in graph learning, primarily for enhancing robustness against adversarial attacks designed to fool GNNs \citep{zugner2018adversarial, dai2018adversarial}. Adversarial domain adaptation (ADA) methods \citep{wu2019adversarial, ma2019gcan} use domain discriminators to align representations across domains, but typically require access to target domain data (labeled or unlabeled) during training, violating the DG setup. EdgeMask-DG* employs a distinct adversarial mechanism: the adversary (MaskNet) operates solely on source domains, perturbing the structure via learned masks to enforce generalizable robustness in the primary model (TaskNet), rather than defending against attacks or adapting to a specific target.

\textbf{Graph Attention Networks (GAT).} GAT \citep{velivckovic2018graph} introduced an attention mechanism allowing nodes to weigh the importance of their neighbours during message passing. Its ability to handle weighted graphs or incorporate edge features makes it well-suited for EdgeMask-DG*, where the learned mask values $\mathbf{s}$ act as dynamic edge attributes guiding the aggregation process. This contrasts with GCN \citep{kipf2016semi} or GIN \citep{xu2018powerful}, which typically treat edges uniformly or require modifications to handle edge weights effectively.

\section{Preliminaries}
\label{sec:preliminaries}

\textbf{Notation.} We represent a graph as $\mathcal{G} = (\mathcal{V}, \mathcal{E}, \mathbf{X})$, where $\mathcal{V}$ is the set of $N=|\mathcal{V}|$ nodes, $\mathcal{E}$ is the set of edges, and $\mathbf{X} \in \mathbb{R}^{N \times d}$ is the node feature matrix with dimension $d$. The graph topology can be represented by an adjacency matrix $\mathbf{A} \in \{0, 1\}^{N \times N}$. Node labels are denoted by $\mathbf{Y}$. A GNN model is denoted by $f(\cdot)$, parameterized by $\theta$. The classification loss (e.g., cross-entropy) is $\mathcal{L}_{cls}$.

\textbf{Graph Domain Generalization (Graph-DG).} We are given $M$ source domains $\{\mathcal{G}_S^i\}_{i=1}^M$, each associated with a distribution $P_S^i(\mathbf{X}, \mathbf{Y}, \mathbf{A})$. The goal is to learn a model $f$ using data from these source domains, $\{\mathcal{G}_S^i, \mathbf{Y}_S^i\}_{i=1}^M$, that generalizes well to an unseen target domain $\mathcal{G}_T$ drawn from $P_T(\mathbf{X}, \mathbf{Y}, \mathbf{A})$, where $P_T \neq P_S^i,\;\forall i\in \{1,\ldots,M\}$.
Specifically, for cross-graph node classification with structural shifts \citep{chen2025graph, wu2022handling}, we make the following assumptions: the marginal distribution of node features is invariant across domains, i.e., $P_{S^i}(\mathbf{X}) = P_{S^j}(\mathbf{X}) = P(\mathbf{X})$ for any source domains $S^i, S^j$; the conditional distribution of labels given features is invariant, i.e., $P_{S^i}(\mathbf{Y} \mid \mathbf{X}) = P_{S^j}(\mathbf{Y} \mid \mathbf{X}) = P(\mathbf{Y} \mid \mathbf{X})$ for any source domains $S^i, S^j$; and the distribution of the graph structure (adjacency $\mathbf{A}$) depends on the source domain, i.e., $P(\mathbf{A} \mid S^i)$ can change depending on the source domain $S^i$. The objective in cross-domain generalization is to minimize the target risk: $\min_{f} \mathbb{E}_{(\mathbf{X}_T, \mathbf{Y}_T, \mathbf{A}_T) \sim P_T} [\mathcal{L}_{cls}(f(\mathbf{X}_T, \mathbf{A}_T), \mathbf{Y}_T)]$.

\textbf{Graph Attention Networks (GAT).}  
GAT  \citep{velivckovic2018graph} learns node embeddings by attending over neighbours. For head $k$ at layer $l$,
\[
\mathbf{h}_i^{(l+1,k)}=\sigma\!\Bigl(\!\sum_{j\in\mathcal N(i)\cup\{i\}}
\alpha_{ij}^{(l,k)}\mathbf W^{(l,k)}\mathbf h_j^{(l)}\Bigr),
\]
where $\alpha_{ij}$ are softmax-normalized attention scores. Outputs from $K$ heads are concatenated. Our model extends GAT by concatenating the learned mask value $s_{ij}$ to the key–query pair, letting low-scoring edges be down-weighted automatically.

\textbf{Spectral Clustering.} Spectral clustering \citep{von2007tutorial} groups data points using the eigenvectors (spectrum) of a Laplacian matrix derived from a similarity graph. For nodes with features $\mathbf{X}$, an affinity matrix $\mathbf{S}$ is built (e.g., using an RBF kernel $S_{ij} = \exp(-\|\mathbf{x}_i - \mathbf{x}_j\|^2 / 2\zeta^2)$ or cosine similarity). The graph Laplacian (e.g., $\mathbf{L}_{sym} = \mathbf{I} - \mathbf{D}_S^{-1/2} \mathbf{S} \mathbf{D}_S^{-1/2}$) is computed. The eigenvectors corresponding to the $K$ smallest eigenvalues are used to form a $K$-dimensional embedding, which is then clustered. This effectively identifies clusters of nodes that are close in the feature space, providing a basis for feature-derived edges.

\textbf{k-Nearest Neighbour (kNN) Graph Construction.} Given a set of node features $\mathbf{X}$, a kNN graph is constructed by creating an edge between a node and its most similar peers. For each node $i$, we identify the set of its $k$ nearest neighbors, $\mathcal{N}_k(i)$, based on a distance metric (e.g., Euclidean distance) or a similarity metric (e.g., cosine similarity) in the feature space $\mathbb{R}^d$. An edge $(i, j)$ is added to the graph if $j \in \mathcal{N}_k(i)$. This method generates a sparse graph structure capturing the most salient local similarities.

\begin{figure}[t]
  \centering
  \includegraphics[width=0.75\linewidth]{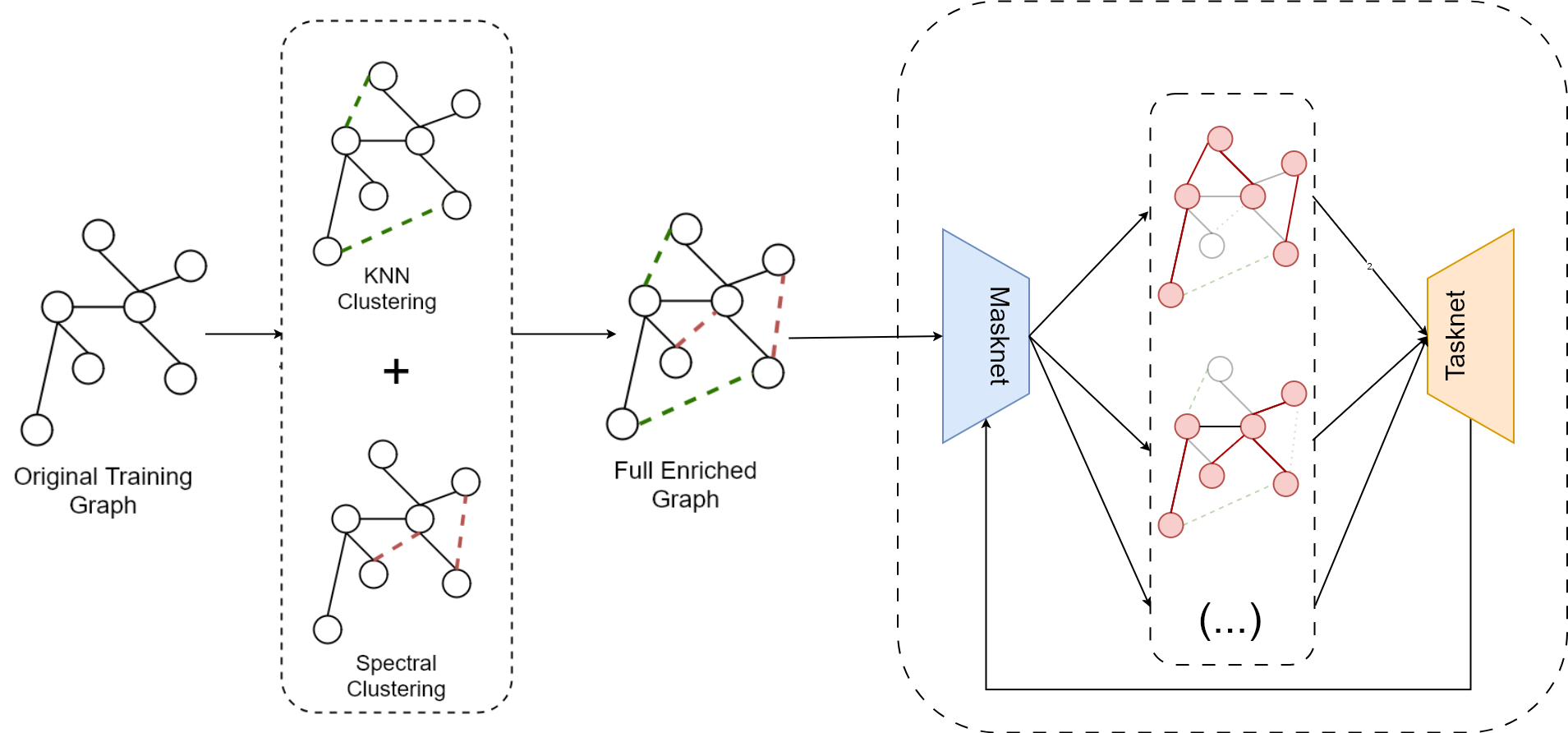}
  \caption{EdgeMask-DG*: (1) enrich the graph with kNN \& spectral edges; (2) play a min–max game where MaskNet sparsifies edges and TaskNet (GAT) learns to stay accurate.}
  \label{fig:flow}
\end{figure}

\section{Proposed Approach for Cross-Domain Generalization: EdgeMask-DG*}
\label{sec:methodology}

EdgeMask-DG* aims to learn a GNN $f$ that generalizes across domains with structural shifts by identifying and utilizing domain-invariant substructures. It achieves this through an adversarial learning process operating on an enriched graph structure, using GAT as the backbone. Figure \ref{fig:flow} illustrates the two main stages of the EdgeMask-DG* framework: (1) Graph Enrichment augments the source graph topology with feature-derived edges from k-Nearest Neighbors and spectral clustering to form an enriched graph; (2) Adversarial Masking uses a min-max game where MaskNet generates a sparse continuous edge mask to identify challenging perturbations while TaskNet (a GAT) learns to minimize classification loss on the masked graph.

\subsection{Enriched Graph Representation}
\label{sec:enriched_graph}

Traditional GNNs rely solely on the provided graph structure $\mathcal{E}_S$. However, in Graph-DG with structural shifts, $\mathcal{E}_S$ can contain domain-specific spurious correlations or be insufficient to capture invariant relationships, especially if the source topology is sparse or noisy. Conversely, node features $\mathbf{X}_S$ are assumed to be more stable across domains \citep{wu2022handling}.
Motivated by this and by prior work such as  GraphAug \citep{chen2025graph}, EdgeMask-DG* enriches the graph representation by incorporating edges derived from node feature similarity alongside the original topological edges. For each source graph $\mathcal{G}_S = (\mathcal{V}_S, \mathcal{E}_S, \mathbf{X}_S)$, we construct a set of feature-based edges using two complementary strategies. Both the spectral edge set $\mathcal{E}_{Spec}$ and the kNN edge set $\mathcal{E}_{kNN}$ are generated independently, with each process using the original node features $\mathbf{X}_S$ as input. The strategies are:

To overcome the limitations of domain-specific topology, EdgeMask-DG* constructs an enriched graph by augmenting the original edges $\mathcal{E}_S$ with new edges derived from node feature similarity. We use two complementary strategies. First, we capture {global similarity} by applying spectral clustering to node features $\mathbf{X}_S$ to identify $K$ communities; edges $\mathcal{E}_{Spec}$ are added between all nodes within the same community. Second, we capture {local similarity} by building a k-Nearest Neighbors graph based on cosine similarity, creating edges $\mathcal{E}_{kNN}$. To manage complexity, we precompute these full edge sets and, during training, sample subsets $\mathcal{E}_{Spec}^{sampled}$ and $\mathcal{E}_{kNN}^{sampled}$. The final enriched edge set is the union $\mathcal{E}'_S = \mathcal{E}_S \cup \mathcal{E}_{Spec}^{sampled} \cup \mathcal{E}_{kNN}^{sampled}$, which is then used in the adversarial masking process.

The motivation for combining these two distinct feature-based edge generation strategies is to capture complementary aspects of node similarity. Spectral clustering provides a {global} perspective, ensuring that all nodes within a broad semantic community are connected, which facilitates robust intra-cluster message passing. In contrast, kNN offers a {local} perspective, creating high-precision connections to a node's most immediate peers. This dual approach creates a more comprehensive and multi-scale structural foundation. The enriched structure $\mathcal{G}'_S = (\mathcal{V}_S, \mathcal{E}'_S, \mathbf{X}_S)$ then serves as the input for our adversarial masking process.

\subsection{Adversarial Edge Masking}
\label{sec:adversarial_masking}

The core of EdgeMask-DG* is a min-max adversarial framework comprising two jointly optimized networks: a TaskNet ($f_\theta$) and a MaskNet ($m_{\phi, \psi}$). The TaskNet is a GAT model, parameterized by $\theta$, that performs node classification. It operates on the node features $\mathbf{X}_S$ and the enriched graph structure $\mathcal{G}'_S$, with its message passing modulated by a continuous edge mask $\mathbf{s}$. 
The MaskNet is a lightweight network that generates this edge mask. It consists of a feature projection layer $p_\psi: \mathbb{R}^d \to \mathbb{R}^{d'}$ and a mask prediction MLP $g_\phi$. For each edge $(u, v) \in \mathcal{E}'_S$, it computes a score $s_{uv} \in [0, 1]$ based on the projected features of the incident nodes:
\begin{equation}
\mathbf{z}_u = \text{ReLU}(p_\psi(\mathbf{x}_u)), \quad \mathbf{z}_v = \text{ReLU}(p_\psi(\mathbf{x}_v))\quad
s_{uv} = \sigma_{\text{sigmoid}}( g_\phi([\mathbf{z}_u, \mathbf{z}_v]) )
\label{eq:mask_gen}
\end{equation}
where $[\cdot, \cdot]$ denotes concatenation, $g_\phi$ is a two-layer MLP, and $\sigma_{\text{sigmoid}}$ ensures the output lies in $[0, 1]$. The two networks are trained via an alternating optimization procedure to solve a min-max objective over the source domains $\mathcal{D}_S$. In the first step, the TaskNet's parameters $\theta$ are updated to minimize the classification loss, conditioned on the adversarial mask $\mathbf{s}$ generated by a fixed MaskNet. This corresponds to the descent step of the game:
\begin{equation}
\min_{\theta} \quad \mathbb{E}_{(\mathcal{G}_S, \mathbf{Y}_S) \sim \mathcal{D}_S} \left[ \mathcal{L}_{cls}(f_\theta(\mathbf{X}_S, \mathcal{G}'_S, \mathbf{s}), \mathbf{Y}_S) \right]
\label{eq:task_min}
\end{equation}
During this step, the mask $\mathbf{s} = m_{\phi, \psi}(\mathbf{X}_S, \mathcal{E}'_S)$ is detached from its generator and treated as a fixed edge attribute.
Conversely, the MaskNet's parameters $(\phi, \psi)$ are updated to find a mask that maximizes the TaskNet's loss (with fixed $\theta$), while being regularized to enforce sparsity. This ascent step is formulated as an equivalent minimization problem:
\begin{equation}
\min_{\phi, \psi} \quad \mathbb{E}_{(\mathcal{G}_S, \mathbf{Y}_S) \sim \mathcal{D}_S} \left[ -\mathcal{L}_{cls}(f_\theta(\cdot), \mathbf{Y}_S) + \lambda \cdot \text{mean}(m_{\phi, \psi}(\cdot)) \right]
\label{eq:mask_min}
\end{equation}
where the arguments to the loss and mask functions are as in Eq.~\ref{eq:task_min}, $\text{mean}(m_{\phi, \psi}(\dots)) = \frac{1}{|\mathcal{E}'_S|} \sum_{(u,v) \in \mathcal{E}'_S} s_{uv}$, and $\lambda$ is a hyperparameter controlling the sparsity strength. The TaskNet parameters $\theta$ are frozen during this update. This objective encourages the MaskNet to identify edges whose removal or down-weighting most significantly degrades the TaskNet's performance, while the $\lambda$ term prevents it from applying a dense, uninformative mask.

\subsection{GAT Backbone with Edge Attributes}
\label{sec:gat_backbone}

We assume the enriched graph \(\mathcal{G}'=(\mathcal{V},\mathcal{E}',\mathbf{X},\mathbf{s})\), where \(\mathbf{s}=\{\,s_{uv}\in[0,1]\mid (u,v)\in\mathcal{E}'\}\) is the continuous mask produced by the MaskNet. At layer \(l\), each node \(u\in\mathcal{V}\) has a feature vector \(\mathbf{h}_u^{(l)}\in\mathbb{R}^{d_{\text{in}}}\). We employ an \(H\)-head Graph Attention Network (GAT) in which, for head \(k\in\{1,\dots,H\}\), one first computes the linearly transformed features \(\mathbf{z}_u^{(k)} = \mathbf{W}^{(k)}\mathbf{h}_u^{(l)}\in\mathbb{R}^{d_h}\). For each neighbor \(v\in\mathcal{N}(u)\), the unnormalized attention logit is then given by  
\[
e_{uv}^{(k)} \;=\; \mathrm{LeakyReLU}\Bigl(\mathbf{a}^{(k)\,\mathsf T}\bigl[\mathbf{z}_u^{(k)}\,\|\,\mathbf{z}_v^{(k)}\,\|\,w^{(k)}\,s_{uv}\bigr]\Bigr),
\]  
where $\mathbf{W}^{(k)} \in \mathbb{R}^{d_h \times d_{\text{in}}}$ is a learnable weight matrix for feature transformation, distinct from the learnable scalar weight $w^{(k)} \in \mathbb{R}$ which scales the mask value. The parameters also include the attention vector $\mathbf{a}^{(k)}\in\mathbb{R}^{2d_h+1}$, and \(\|\) denotes vector concatenation.
The scalar term $w^{(k)}\,s_{uv}$ injects the learned mask into the attention computation, so that edges with low $s_{uv}$ are less likely to receive high attention weights. We normalize these logits via softmax over $v\in\mathcal{N}(u)$ to obtain $\alpha_{uv}^{(k)} = \frac{\exp\bigl(e_{uv}^{(k)}\bigr)}{\sum_{w\in\mathcal{N}(u)}\exp\bigl(e_{uw}^{(k)}\bigr)}$.

Next, the message passed from $v$ to $u$ under head $k$ is multiplied by $s_{uv}$, yielding $m_{uv}^{(k)} = s_{uv}\,\alpha_{uv}^{(k)}\,\mathbf{z}_v^{(k)}$. In this way, even if $\alpha_{uv}^{(k)}$ is large, a near-zero $s_{uv}$ will completely nullify the contribution of node  $v$. Node $u$’s aggregated pre-activation representation for head $k$ is then $\tilde{\mathbf{h}}_u^{(k)} = \sum_{v\in\mathcal{N}(u)} m_{uv}^{(k)}$.
At intermediate layers, we apply a nonlinearity \(\sigma(\cdot)\) (RELU) and concatenate across heads: \(\mathbf{h}_u^{(l+1)} = \bigl\|_{k=1}^H \sigma(\tilde{\mathbf{h}}_u^{(k)})\in\mathbb{R}^{H\,d_h}\). In the final layer, we instead average across heads before classification: \(\mathbf{h}_u^{(L)} = \frac{1}{H}\sum_{k=1}^H \sigma(\tilde{\mathbf{h}}_u^{(k)})\in\mathbb{R}^{d_h}\), and the logits are \(\mathbf{o}_u = \mathbf{W}_{\text{out}}\,\mathbf{h}_u^{(L)}\) with \(\mathbf{W}_{\text{out}}\in\mathbb{R}^{C\times d_h}\).  
When \(s_{uv}=1\) for every edge, this reduces exactly to the standard GAT formulation. When \(s_{uv}=0\), edge \((u,v)\) is entirely suppressed, both in the attention score (since \(w^{(k)}\,s_{uv}=0\)) and in the message itself. As a result, the adversarially learned mask \(\mathbf{s}\) influences the flow of information during message passing: edges that the MaskNet deems uninformative (low \(s_{uv}\)) contribute almost nothing to the final node embeddings. This enables the TaskNet to focus on substructures that are robust across domains.

\subsection{Training Procedure}
\label{sec:training_proc}

The overall training process is summarized in Algorithm \ref{alg:edgemask_dg_star}. It is crucial to note that the expectations in the objective functions (Eq. \ref{eq:task_min} and \ref{eq:mask_min}) are taken over the source domain distribution $\mathcal{D}_S$. This is a fundamental constraint of the domain generalization setting, where the target domain is entirely unseen during training. The algorithm, therefore, optimizes the model exclusively on the source graphs. The core hypothesis is that by learning to be robust against adversarial structural perturbations on the source domains, the model will implicitly discover domain-invariant patterns that generalize to the target domain.

The procedure involves iterating through epochs, and within each epoch, processing each source domain. For each source graph, we first construct the enriched edge set $\mathcal{E}'_S$. Then, we perform alternating optimization: $N_{descent}$ gradient descent steps are taken on the TaskNet parameters $\theta$ (minimizing Eq. \ref{eq:task_min}), followed by $N_{ascent}$ gradient steps on the MaskNet parameters $\phi, \psi$ (minimizing Eq. \ref{eq:mask_min}). This adversarial interplay drives the TaskNet to learn representations that are invariant to the structural perturbations found by the MaskNet, thereby promoting domain generalization.

\begin{algorithm}[tb]
   \caption{EdgeMask-DG* training procedure}
   \label{alg:edgemask_dg_star}
\begin{algorithmic}[1] 
   \State {\bfseries Input:} Source graphs $\{(\mathcal{G}_S^i, \mathbf{Y}_S^i)\}_{i=1}^M$, TaskNet $f_\theta$ (GAT), MaskNet $m_{\phi, \psi}$ (Projection $p_\psi$, MLP $g_\phi$), epochs $E$, learning rates $\eta_\theta, \eta_{\phi\psi}$, steps $N_{descent}, N_{ascent}$, sparsity $\lambda$, spectral sample ratio $\gamma_{spec}$.
   \State Initialize parameters $\theta, \phi, \psi$.
   \State Precompute spectral edges $\{\mathcal{E}_{Spectral}^i\}_{i=1}^M$ for all source graphs.
   \State Initialize optimizers $Opt_\theta$, $Opt_{\phi\psi}$.
   \For{epoch = 1 to $E$}
       \For{each source graph $i \in \{1, \dots, M\}$}
           \State Let $\mathcal{G}_S = \mathcal{G}_S^i = (\mathcal{V}_S, \mathcal{E}_S, \mathbf{X}_S)$, $\mathbf{Y}_S = \mathbf{Y}_S^i$.
           \State Sample $\mathcal{E}_{Spectral}^{sampled} \subseteq \mathcal{E}_{Spectral}^i$ with ratio $\gamma_{spec}$.
           \State Combine edges $\mathcal{E}'_S = \text{coalesce}(\mathcal{E}_S \cup \mathcal{E}_{Spectral}^{sampled})$.
           \State {\color{blue}// --- TaskNet Descent ---}
           \For{$k = 1$ to $N_{descent}$}
               \State Compute mask $\mathbf{s} = m_{\phi, \psi}(\mathbf{X}_S, \mathcal{E}'_S)$ \quad {\color{gray}// Detach s from $\phi, \psi$}
               \State Compute logits $\mathbf{O} = f_\theta(\mathbf{X}_S, \mathcal{E}'_S, \mathbf{s})$ \quad {\color{gray}// Pass s as edge\_attr}
               \State Compute loss $\mathcal{L}_{task} = \mathcal{L}_{cls}(\mathbf{O}, \mathbf{Y}_S)$
               \State Update $\theta$ using $\nabla_\theta \mathcal{L}_{task}$ via $Opt_\theta$.
           \EndFor
           \State {\color{blue}// --- MaskNet Ascent  ---}
           \For{$k = 1$ to $N_{ascent}$}
                \State Compute mask $\mathbf{s} = m_{\phi, \psi}(\mathbf{X}_S, \mathcal{E}'_S)$
                \State Compute logits $\mathbf{O} = f_\theta(\mathbf{X}_S, \mathcal{E}'_S, \mathbf{s})$ \quad {\color{gray}// Detach O from $\theta$}
                \State Compute loss $\mathcal{L}_{task} = \mathcal{L}_{cls}(\mathbf{O}, \mathbf{Y}_S)$
                \State Compute regularizer $R(\mathbf{s}) = \text{mean}(\mathbf{s})$
                \State Compute objective $\mathcal{L}_{mask} = -\mathcal{L}_{task} + \lambda R(\mathbf{s})$
                \State Update $\phi, \psi$ using $\nabla_{\phi, \psi} \mathcal{L}_{mask}$ via $Opt_{\phi\psi}$.
           \EndFor
       \EndFor
   \EndFor
\State {\bfseries Output:} Trained TaskNet $f_\theta$.
\end{algorithmic}
\end{algorithm}

\subsection{Theoretical Grounding: Robust Optimization Perspective}
\label{sec:theory_main}

The adversarial training procedure of   {EdgeMask-DG*} can be understood through the lens of robust optimization \citep{ben2009robust, bertsimas2011theory}. The MaskNet aims to find a ``worst-case'' sparse mask $\mathbf{s}$ within a budget $\rho$ (defined by $\frac{1}{m} \|\mathbf{s}\|_1 \le \rho$) that maximizes the TaskNet's loss $\ell(f_\theta, \mathbf{s}) = \mathcal{L}_{cls}(f_\theta(\mathbf{X}, \mathbf{A}(\mathbf{s})), \mathbf{Y})$.
We define this worst-case sparse-mask loss for a fixed $\theta$ as:
\begin{equation}
P(\theta) = \max_{\mathbf{s} \in \mathcal{S}_\rho} \ell(f_\theta, \mathbf{s}), \quad \text{where } \mathcal{S}_\rho = \{\mathbf{s} \in [0, 1]^m : \frac{1}{m} \|\mathbf{s}\|_1 \le \rho \}.
\label{eq:P_main}
\end{equation}

\textbf{Robust-optimization view.}
The problem $P(\theta)$ in \eqref{eq:P_main} is upper-bounded by the Lagrangian-penalized objective $D_\lambda(\theta)=\max_{\mathbf{s} \in [0,1]^m} \bigl[\ell(f_\theta, \mathbf{s})+\lambda(\tfrac{1}{m}\|\mathbf{s}\|_1-\rho)\bigr]$ for every $\lambda \ge 0$ (weak duality). We therefore train $(\theta,\phi,\psi)$ to find a saddle point for $D_{\lambda_{\text{alg}}}(\theta)$ with a penalty weight $\lambda_{\text{alg}}$. The MaskNet's objective in Algorithm~\ref{alg:edgemask_dg_star} (maximizing $\ell(f_\theta, \mathbf{s}) - \lambda_{\text{alg}} \cdot \text{mean}(\mathbf{s})$) targets the inner maximization $\max_{\mathbf{s}} [\ell(f_\theta, \mathbf{s}) - \lambda_{\text{alg}}\tfrac{1}{m}\|\mathbf{s}\|_1]$ (ignoring the $-\lambda_{\text{alg}}\rho$ term which is constant w.r.t $\mathbf{s}$ for a fixed $\lambda_{\text{alg}}$). We show in Appendix~\ref{app:kkt_analysis} that, when the inner maximisation has converged, the resulting masks satisfy the KKT conditions of the constrained problem. In practice, rather than keeping $\lambda_{\text{alg}}$ fixed, one could also update it using a dual ascent step like $\lambda_{\text{alg}} \leftarrow [\lambda_{\text{alg}} + \alpha(\tfrac{1}{m}\|\mathbf{s}\|_1-\rho)]_+$, yielding a primal-dual algorithm that can help drive the mean mask value toward $\rho$. For simplicity in this work, we keep $\lambda_{\text{alg}}$ as a fixed hyperparameter. 

Furthermore, we can characterize the properties of the optimal adversarial mask $\mathbf{s}^*$ sought by the MaskNet using the Karush-Kuhn-Tucker (KKT) conditions associated with ~\eqref{eq:P_main}.
The continuous mask values $s_{uv} \in [0,1]$ generated by our MaskNet (Eq. \ref{eq:mask_gen}) are differentiable. If employing discrete masks, techniques like Gumbel-softmax \citep{jang2016categorical, maddison2016concrete} or hard-concrete distributions can provide differentiable relaxations, making the following analysis applicable to such cases.

\begin{lemma}[Optimality Conditions for Adversarial Mask]
\label{lemma:kkt_main}
Let $\mathbf{s}^*$ be an optimal solution to the mask optimization problem \eqref{eq:P_main} for a fixed $\theta$. Assuming continuous differentiability of $\ell(f_\theta, \mathbf{s})$ w.r.t. $\mathbf{s}$ and constraint qualifications (Slater's condition holds), there exists an optimal dual variable $\lambda_L^* \ge 0$ (corresponding to the sparsity constraint) such that the optimal mask values $s_e^*$ satisfy the following conditions for all edges $e$. {\bf (a)} If $0 < s_e^* < 1$ (edge partially masked): $\displaystyle \frac{\partial \ell(f_\theta, \mathbf{s}^*)}{\partial s_e} = \frac{\lambda_L^*}{m}$. {\bf (b)} If $s_e^* = 0$ (edge fully masked): $\displaystyle \frac{\partial \ell(f_\theta, \mathbf{s}^*)}{\partial s_e} \le \frac{\lambda_L^*}{m}$. {\bf (c)} If $s_e^* = 1$ (edge fully included): $\displaystyle \frac{\partial \ell(f_\theta, \mathbf{s}^*)}{\partial s_e} \ge \frac{\lambda_L^*}{m}$.
\end{lemma}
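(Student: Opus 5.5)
We treat \eqref{eq:P_main} as a smooth maximization over the polyhedral set $\mathcal{S}_\rho$, written in standard form as the minimization of $-\ell(f_\theta,\mathbf{s})$ subject to
\[
\tfrac{1}{m}\textstyle\sum_e s_e - \rho \le 0, \qquad -s_e \le 0, \qquad s_e - 1 \le 0 \quad (e=1,\dots,m).
\]
All constraints are affine, so the linearity constraint qualification (Abadie) holds automatically. Under the stated Slater assumption, namely that some $\bar{\mathbf{s}}$ has $0<\bar s_e<1$ and $\tfrac1m\|\bar{\mathbf{s}}\|_1<\rho$ (possible whenever $\rho>0$), the KKT conditions are therefore necessary at any local, and a fortiori global, maximizer $\mathbf{s}^*$. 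We do not need convexity of $\ell$ in $\mathbf{s}$, since we only claim necessary conditions.

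We introduce multipliers $\lambda_L^*\ge 0$ for the sparsity constraint, $\mu_e\ge 0$ for $s_e\ge 0$, and $\nu_e\ge 0$ for $s_e\le 1$. For the maximization, stationarity of the Lagrangian $\ell(\mathbf{s}) - \lambda_L(\tfrac1m\sum_e s_e-\rho) + \sum_e\mu_e s_e + \sum_e \nu_e(1-s_e)$ reads
\[
\frac{\partial \ell(f_\theta,\mathbf{s}^*)}{\partial s_e} - \frac{\lambda_L^*}{m} + \mu_e - \nu_e = 0 \quad \text{for every } e.
\]
Complementary slackness gives $\mu_e s_e^*=0$, $\nu_e(1-s_e^*)=0$, and $\lambda_L^*(\tfrac1m\|\mathbf{s}^*\|_1-\rho)=0$.

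The three cases then follow by reading off the active box constraints.
\begin{itemize}
\item If $0<s_e^*<1$, both $\mu_e$ and $\nu_e$ vanish, so $\partial_e\ell=\lambda_L^*/m$, which is (a).
\item If $s_e^*=0$, then $\nu_e=0$, so $\partial_e\ell=\lambda_L^*/m-\mu_e\le\lambda_L^*/m$, which is (b).
\item If $s_e^*=1$, then $\mu_e=0$, so $\partial_e\ell=\lambda_L^*/m+\nu_e\ge\lambda_L^*/m$, which is (c).
\end{itemize}
When $\rho\ge 1$ the sparsity constraint can be slack, in which case $\lambda_L^*=0$ and the statement still holds.

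The main care point is sign bookkeeping: we are maximizing, so the inequalities must be oriented as "increasing $s_e$ at the lower bound does not help beyond the shadow price" and "decreasing it at the upper bound does not help." We would double-check these by a direct first-order argument. At $s_e^*=0$ with $\partial_e\ell>\lambda_L^*/m$, one could shift a small amount of mass into $e$ from an interior edge whose derivative equals $\lambda_L^*/m$, or use the slack budget, and strictly increase $\ell$, contradicting optimality.

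The other subtle point is that the CQ justification must not rely on convexity. Invoking affine constraints (or Slater for the linear constraint system) for existence of multipliers is the step we would state explicitly. Finally, we would remark that the lemma concerns the constrained problem $P(\theta)$. This links to the penalized MaskNet objective: at $\lambda_{\text{alg}}=\lambda_L^*$, the same conditions are the stationarity conditions of $\max_{\mathbf{s}\in[0,1]^m}[\ell-\lambda_{\text{alg}}\tfrac1m\|\mathbf{s}\|_1]$.
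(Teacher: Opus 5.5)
Your proposal is correct and follows essentially the same route as the paper: recast the maximization as minimizing $-\ell$, attach multipliers to the sparsity and box constraints, and read off cases (a)--(c) from stationarity and complementary slackness (your $\mu_e,\nu_e$ are the paper's with the roles of the two box bounds swapped). Your constraint-qualification argument is slightly more careful than the paper's: you use affine constraints, or an interior Slater point such as $\bar s_e=\rho/2$, whereas the paper appeals to $\mathbf{s}=\mathbf{0}$, which lies on the boundary of the box constraints.
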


\textbf{Interpretation:} Lemma~\ref{lemma:kkt_main} reveals that the optimal adversarial mask operates based on a threshold mechanism determined by the optimal sparsity cost $\tau^* = \lambda^*/m$. Edges whose marginal contribution to increasing the loss ($\partial \ell / \partial s_e$) exceeds this threshold are fully included ($s_e^*=1$), edges whose contribution is below the threshold are fully masked ($s_e^*=0$), and edges whose contribution exactly matches the threshold can be partially masked ($0 < s_e^* < 1$). This provides insight into how the MaskNet prioritizes edges based on their impact on the TaskNet's loss versus the sparsity budget. The alternating training procedure (Algorithm~\ref{alg:edgemask_dg_star}) drives the TaskNet to become robust against masks with these properties, encouraging reliance on structures whose removal does not drastically increase the loss.

\subsection{Computational Complexity}
\label{sec:complexity_main}

A thorough analysis of the computational complexity for our EdgeMask-DG variants, considering both precomputation and per-epoch training costs, is provided in Appendix~\ref{sec:time_complexity_analysis}. The detailed breakdown incorporates parameters such as the number of nodes ($N$), edges ($M, \tilde M$), feature dimensions ($F, P$), GAT architecture ($L, H, h$), and training schedule ($T, D, U, K$).
After substituting typical hyperparameter values and focusing on dominant terms, the key findings are as follows:
The base EdgeMask-DG model (without additional edges) exhibits an overall complexity of $O(KTN)$, primarily driven by the epoch-wise training on sparse graphs.
Introducing kNN augmentation in EdgeMask-DG* (kNN only) shifts the bottleneck to the precomputation phase, resulting in an overall complexity of $O(KN^2)$.
The most comprehensive variant, EdgeMask-DG* (kNN + spectral), which also incorporates spectral clustering edges, sees its complexity further increase to $O(KN^3)$, dominated by the spectral precomputation.
This progression underscores the trade-off between richer graph representations and computational cost, with the choice of variant depending on the available resources and the scale of the graphs ($N, K$). The number of training epochs ($T$) remains a significant factor, primarily for the base model operating on inherently sparse structures.

\paragraph{Augment-then-prune rationale.}
EdgeMask-DG* explicitly widens the relational hypothesis space via a union graph (original $\mathcal{E}_S$ plus feature-guided edges), and then {prunes} it via an adversarially learned, sparse mask. The adversary is free to down-weight or remove edges from {either} source, so the retained subgraph is the one that maximally stabilizes the task loss across source domains. Empirically, this mechanism prunes domain-dependent links from both original and augmented sets while {retaining} 30--39\% of augmented edges on average (Table~\ref{tab:mask_prune_stats}, p.~\pageref{tab:mask_prune_stats}), and unlike naive augmentation, consistently {benefits} from the extra edges (Table~\ref{tab:aug_effect_baselines}, p.~\pageref{tab:aug_effect_baselines}).

\section{Experiments}
\label{sec:experiments}

We evaluate EdgeMask-DG* on a diverse set of benchmarks for node classification under domain shifts, comparing it against various baselines and recent state-of-the-art methods.

\subsection{Experimental Setup}

Our evaluation spans several benchmark datasets prevalent in Graph-DG research, selected to cover diverse distribution shifts, graph properties, and evaluation schemes. A primary set of experiments focuses on the widely used citation networks ACMv9 (A), DBLPv7 (D), and Citationv1 (C).
In these datasets, nodes represent academic papers with bag-of-words abstract features (6,775 dimensions) and are classified into 5 research areas. The core challenge arises from significant structural differences across these graphs despite consistent feature and label semantics. To assess broader applicability, we extend our evaluation to additional benchmarks detailed in Table \ref{tab:dataset_summary_new}. This set includes Cora and Amazon-Photo (featuring artificial transformations), Twitch-explicit, Elliptic, and OGB-Arxiv (characterized by temporal evolution), and Facebook-100 (involving cross-domain transfers). These datasets also vary in their train/validation/test splitting strategies, utilizing either domain-level or time-aware splits.
\renewcommand{\arraystretch}{1.2}
\setlength{\tabcolsep}{6pt}
\begin{table}[htbp]
\centering
\caption{Summary of datasets. “Artificial Transformation” uses synthetic spurious features to create domain shifts; “Cross-Domain Transfers” uses graphs from distinct domains; “Temporal Evolution” uses dynamic graphs. Splits are “Domain-Level” (by graph) or “Time-Aware” (by time). See Appendix E for details.}
\label{tab:dataset_summary_new}
\begin{threeparttable}
\begin{tabular}{@{}l c c c c c@{}}
\toprule
\textbf{Dataset}    & \textbf{\#Nodes}   & \textbf{\#Edges}       & \textbf{\#Classes} & \textbf{Split}      & \textbf{Metric} \\ 
\midrule
\addlinespace
\multicolumn{6}{l}{\emph{Artificial Transformation}}\\
Cora                & 2,703           & 5,278               & 10                & Domain-Level         & Accuracy        \\
Amazon-Photo        & 7,650           & 119,081             & 10                & Domain-Level         & Accuracy        \\ 
\midrule
\multicolumn{6}{l}{\emph{Cross-Domain Transfers}}\\
Twitch-explicit     & 1,912–9,498     & 31,299–153,138       & 2                 & Domain-Level         & ROC-AUC         \\
Facebook-100        & 769–41,536      & 16,656–1,590,655     & 2                 & Domain-Level         & Accuracy        \\ 
\midrule
\multicolumn{6}{l}{\emph{Temporal Evolution}}\\
Elliptic            & 203,769         & 234,355             & 2                 & Time-Aware           & F1 Score        \\
OGB-Arxiv           & 169,343         & 1,166,243           & 40                & Time-Aware           & Accuracy        \\
\bottomrule
\end{tabular}
\begin{tablenotes}[flushleft]
\small
\item \textbf{Adapted from:} \citet{yang2016revisiting}; \citet{shchur2018pitfalls}; \citet{rozemberczki2021multi}; \citet{traud2011facebook}; \citet{pareja2020evolvegcn}; \citet{hu2020open}.
\end{tablenotes}
\end{threeparttable}
\end{table}
For the citation networks (ACM, DBLP, Citation), we employ the standard leave-one-domain-out protocol. This involves training a model on two domains (e.g., A and C) and evaluating its performance on the unseen third domain (e.g., D), leading to three distinct scenarios: AC→D, AD→C, and CD→A. Performance is quantified using Micro-F1 and Macro-F1 scores \citep{sechidis2011stratification}. For the additional datasets (Cora, Photo, FB-100, Twitch, Elliptic, ArXiv), we adhere to their established train/validation/test splits and evaluation metrics—such as Accuracy, ROC-AUC, or F1 score—as specified in Table \ref{tab:dataset_summary_new} and common in existing literature.

We benchmark EdgeMask-DG* against a comprehensive set of methods. This includes standard GNNs trained with Empirical Risk Minimization (ERM), such as GCN \citep{kipf2016semi}, GIN \citep{xu2018powerful}, GAT \citep{velivckovic2018graph}, and SGC \citep{wu2019simplifying}. We also compare against general domain-generalization and robustness techniques, including ERM, DRNN \citep{sagawa2020distributionally}, MMD \citep{gretton2012kernel}, ARM \citep{zhang2021arm}, and EERM \citep{wu2022handling}. Furthermore, our comparison encompasses specialized Graph-DG methods: EGC \citep{tailor2022empirical}, ADA \citep{volpi2018generalizing}, MAT \citep{wang2022maximum}, FLOOD \citep{liu2023flood}, MARIO \citep{zhu2024mario}, LiSA \citep{yu2023lisa}, IS-GIB \citep{yang2023isgib}, and GRM \citep{wang2025grm}. Finally, we include recent state-of-the-art approaches like TRACI \citep{AAAI_TRACI_2025} and GraphAug \citep{chen2025graph}, which utilize both GCN and GIN backbones.

EdgeMask-DG* is implemented using PyTorch Geometric \citep{fey2019fast}. The TaskNet ($f_\theta$) is a 2-layer Graph Attention Network (GAT) with 8 attention heads, a hidden dimension of 64 per head, ELU activation, and dropout rates of 0.6 for attention coefficients and 0.5 between layers. The MaskNet ($m_{\phi, \psi}$) comprises a projection layer $p_\psi$ that maps the input node feature dimension $d$ to $d'=128$, followed by an MLP $g_\phi$ with one hidden layer of size 64. Training employs the Adam optimizer \citep{kingma2014adam}. For the citation benchmarks (ACM, DBLP, Citation), we use a learning rate of $10^{-3}$, TaskNet weight decay of $5 \times 10^{-4}$, and train for 200 epochs. Key hyperparameters include a sparsity coefficient $\lambda=10^{-3}$ (for citation benchmarks, potentially varied for others), $N_{descent}=5$ TaskNet steps, and $N_{ascent}=1$ MaskNet step per iteration. The enriched graph structure incorporates kNN edges (with $k=10$ using cosine similarity) and spectral clustering edges (with $K_c=100$ clusters for citation benchmarks). We sample these feature-derived edges with ratios $\gamma_{knn}=0.1$ and $\gamma_{spec}=0.1$, though these parameters are adjusted for different datasets. 

\subsection{Main Results }

We present the empirical evaluation of EdgeMask-DG*, comparing its performance against a comprehensive suite of baselines and state-of-the-art methods across various domain generalization benchmarks. The results demonstrate the effectiveness of our proposed adversarial masking on enriched graph structures.

\subsubsection{Performance on Citation Networks (ACM, DBLP, Citation)}

Table \ref{tab:results_citations} summarizes the performance on the cross-graph node classification task using the ACM, DBLP, and Citation datasets. This leave-one-domain-out evaluation directly tests generalization to unseen graph structures. 
Overall, EdgeMask-DG* achieves the highest total average F1 score of 73.81, surpassing all other methods, including the recent strong baseline GRM (73.28). This indicates the superior overall generalization capability of our approach. Specifically, our method establishes new state-of-the-art performance in two of the three challenging scenarios. In the AD$\to$C scenario, EdgeMask-DG* achieves a Micro-F1 of 79.14\% and a Macro-F1 of 77.66\%, outperforming all competitors. The improvement in Macro-F1 is particularly notable, suggesting our model's robustness in classifying less frequent classes. Similarly, in the CD$\to$A scenario, our method leads with a Micro-F1 of 71.62\% and a Macro-F1 of 72.11\%.
In the AC$\to$D scenario, while GRM reports the highest scores, EdgeMask-DG* remains highly competitive with a Micro-F1 of 72.54\%, performing on par with or better than other recent methods like GraphAug (GIN) and TRACI. These results validate that our proposed mechanism of discovering invariant substructures through adversarial masking on an enriched graph is highly effective for handling the structural shifts inherent in citation networks.
\begin{table}[h!]
\centering
\caption{Performance on cross‐graph node classification (ACM, DBLP, Citation). Results are mean F1 scores (\% ± std.\ dev.). Best results per column are in \textbf{bold}. EdgeMask-DG* uses a GAT backbone and adversarial masking on enriched graphs (kNN + Spectral edges).}
\label{tab:results_citations}
\setlength{\tabcolsep}{2pt}
\begin{tabular}{@{}l*{3}{cc}c@{}}
\toprule
Method 
  & \multicolumn{2}{c}{AC$\to$D} 
  & \multicolumn{2}{c}{AD$\to$C} 
  & \multicolumn{2}{c}{CD$\to$A} 
  & Total Avg \\
\cmidrule(lr){2-3}\cmidrule(lr){4-5}\cmidrule(lr){6-7}
  & Micro F1 & Macro F1 
  & Micro F1 & Macro F1 
  & Micro F1 & Macro F1 
  &  \\
\midrule
MMD            & $68.70\pm0.88$ & $65.88\pm0.54$ & $70.24\pm0.77$ & $65.79\pm2.40$ & $62.24\pm1.48$ & $58.45\pm4.15$ & $65.22$ \\
ERM            & $67.92\pm0.50$ & $62.72\pm0.58$ & $70.91\pm0.61$ & $67.81\pm0.69$ & $62.44\pm0.98$ & $59.64\pm2.34$ & $65.24$ \\
DRNN           & $69.07\pm0.90$ & $65.19\pm0.26$ & $71.46\pm0.69$ & $67.98\pm0.86$ & $63.45\pm0.89$ & $60.27\pm4.00$ & $66.24$ \\
GA (GCN)       & $71.45\pm0.55$ & $67.19\pm0.72$ & $71.62\pm0.75$ & $66.09\pm0.56$ & $66.44\pm0.52$ & $60.70\pm0.70$ & $67.25$ \\
EERM           & $71.31\pm0.10$ & $68.39\pm0.19$ & $72.91\pm0.39$ & $68.47\pm1.32$ & $65.28\pm1.04$ & $62.98\pm2.49$ & $68.22$ \\
LiSA           & $70.34\pm1.45$ & $67.15\pm1.45$ & $73.53\pm0.60$ & $69.65\pm0.88$ & $66.28\pm0.67$ & $64.17\pm2.43$ & $68.52$ \\
TRACI          & $72.95\pm0.77$ & $69.25\pm1.22$ & $74.20\pm0.68$ & $69.08\pm1.07$ & $67.01\pm1.04$ & $65.03\pm0.35$ & $69.59$ \\
MARIO          & $71.71\pm0.72$ & $68.66\pm0.87$ & $76.05\pm0.16$ & $72.38\pm0.20$ & $67.90\pm0.47$ & $67.13\pm0.16$ & $70.64$ \\
GA (GIN)       & $72.55\pm0.60$ & $70.05\pm1.71$ & $74.77\pm0.60$ & $71.79\pm1.18$ & $69.43\pm0.65$ & $68.77\pm1.07$ & $71.23$ \\
GRM            & $\mathbf{73.72\pm0.75}$ & $\mathbf{71.19\pm0.93}$ & $79.13\pm0.22$ & $75.74\pm0.54$ & $70.23\pm1.04$ & $69.66\pm2.00$ & $73.28$ \\
\midrule
\textbf{EDG*}  
               & $72.54\pm1.41$ & $69.81\pm1.76$ & $\mathbf{79.14\pm0.19}$ & $\mathbf{77.66\pm0.15}$ & $\mathbf{71.62\pm0.29}$ & $\mathbf{72.11\pm0.25}$ & $\mathbf{73.81}$ \\
\bottomrule
\end{tabular}
\end{table}

\subsubsection{Performance on Diverse Graph-DG Benchmarks}
To assess the broader applicability of EdgeMask-DG*, we evaluated it on datasets exhibiting different types of distribution shifts, as shown in Table \ref{tab:results_cora_photo_fb_twitch} and Table \ref{tab:results_elliptic_arxiv_sorted}.
On benchmarks with artificial domain shifts (Cora and Photo), EdgeMask-DG* demonstrates exceptional performance, achieving the highest minimum and average accuracy. On Photo, the average accuracy of 94.8\% represents a more than 2-percent improvement over the next best method, GRM. The model also performs very strongly on the cross-domain social network Twitch, attaining the highest average ROC-AUC of 59.3\%, which is a substantial improvement over prior work.

However, on the FB-100 dataset, our method's performance is less competitive. The average accuracy of 52.1\% is lower than that of several baselines, and the minimum accuracy of 45.4 is also comparatively low. This outcome suggests that the core assumption of our method—that node features provide a stable, domain-invariant signal for constructing a reliable enriched graph—may be less effective for the FB-100 dataset, where the topological structure might be overwhelmingly dominant and the node features less informative for generalization.

On the temporal benchmarks, Elliptic and ArXiv, the results presented in Table \ref{tab:results_elliptic_arxiv_sorted} show varied performance relative to the baselines. While one method, GRM, reports exceptionally high, state-of-the-art scores, these results are unverifiable due to the lack of a public codebase. Among the verifiable methods, EdgeMask-DG* delivers competitive performance. For instance, on Elliptic, its average F1 score of 69.7\% is comparable to methods like ARM (69.9\%) and MMD (70.6\%), though lower than DRNN (71.8\%). A similar trend is observed on ArXiv, where our method is competitive with baselines but is outperformed by MARIO and IS-GIB. This suggests that while our approach is robust, specialized methods for handling temporal evolution may have an advantage in these specific settings.

\begin{table}[htbp]
\centering
\caption{OOD Performance on Cora, Photo, FB-100 \& Twitch. Min.\ denotes minimum accuracy/ROC-AUC across domains/runs, Avg.\ denotes average. Best results are in bold. Values for EdgeMask-DG* are single‐seed for Min.\ (no $\pm$) and include $\pm$ for Avg.\ where provided.}
\label{tab:results_cora_photo_fb_twitch}
\begin{tabular}{@{}l
  *{2}{c}
  *{2}{c}
  *{2}{c}
  *{2}{c}
@{}}
\toprule
Method
  & \multicolumn{2}{c}{Cora}
  & \multicolumn{2}{c}{Photo}
  & \multicolumn{2}{c}{FB-100}
  & \multicolumn{2}{c}{Twitch} \\
\cmidrule(lr){2-3}\cmidrule(lr){4-5}\cmidrule(lr){6-7}\cmidrule(lr){8-9}
  & Min.\ & Avg.\ 
  & Min.\ & Avg.\ 
  & Min.\ & Avg.\ 
  & Min.\ & Avg.\ \\
\midrule
ERM  
  & $65.0\pm1.5$ & $68.2\pm0.4$ 
  & $84.4\pm1.5$ & $88.6\pm1.3$ 
  & $50.5\pm0.4$ & $52.8\pm0.6$ 
  & $49.7\pm1.1$ & $52.2\pm0.9$ \\

DRNN 
  & $56.4\pm1.4$ & $74.8\pm1.2$ 
  & $76.7\pm1.5$ & $77.1\pm1.2$ 
  & $48.0\pm1.0$ & $51.4\pm0.7$ 
  & $44.0\pm0.5$ & $48.1\pm1.4$ \\

MMD  
  & $52.4\pm1.5$ & $75.8\pm0.6$ 
  & $82.1\pm1.1$ & $84.8\pm0.6$ 
  & $51.4\pm0.9$ & $53.3\pm0.7$ 
  & $42.8\pm0.6$ & $49.1\pm0.9$ \\

ARM  
  & $60.6\pm1.1$ & $62.9\pm1.4$ 
  & $58.3\pm1.1$ & $74.6\pm0.7$ 
  & $50.7\pm1.3$ & $54.5\pm0.9$ 
  & $43.2\pm1.5$ & $48.5\pm1.3$ \\

EERM 
  & $68.0\pm0.6$ & $70.5\pm1.0$ 
  & $90.8\pm0.5$ & $91.8\pm0.9$ 
  & $50.9\pm0.4$ & $54.3\pm1.4$ 
  & $51.6\pm0.8$ & $54.1\pm0.9$ \\

LiSA 
  & $71.1\pm1.5$ & $76.7\pm0.8$ 
  & $90.3\pm1.2$ & $91.5\pm1.5$ 
  & $48.8\pm1.2$ & $54.2\pm1.0$ 
  & $48.6\pm1.2$ & $55.8\pm2.2$ \\

IS-GIB
  & $71.3\pm1.9$ & $78.6\pm1.5$ 
  & $87.2\pm0.6$ & $90.2\pm0.9$ 
  & $49.6\pm1.6$ & $54.6\pm1.2$ 
  & $51.2\pm1.9$ & $56.0\pm1.2$ \\

MARIO
  & $70.8\pm1.3$ & $76.1\pm1.0$ 
  & $88.6\pm0.8$ & $89.4\pm1.4$ 
  & $50.3\pm1.9$ & $53.9\pm1.4$ 
  & $50.7\pm2.0$ & $55.1\pm1.9$ \\

GRM  
  & $74.2\pm1.2$ & $81.2\pm1.5$ 
  & $91.3\pm0.9$ & $92.7\pm1.6$ 
  & $\mathbf{52.0\pm1.3}$ & $\mathbf{55.1\pm1.1}$ 
  & $52.5\pm1.7$ & $56.7\pm1.0$ \\

\midrule
\textbf{EDG*}
  & $\mathbf{78.0 \pm1.2}$              & $\mathbf{83.2\pm1.1}$ 
  & $\mathbf{94.3 \pm0.6}$              & $\mathbf{94.8 \pm0.6}$ 
  & $45.4 \pm1.2$                       & ${52.1\pm1.8}$ 
  & $\mathbf{52.5\pm1.3}$              & $\mathbf{59.3\pm1.7}$ \\
\bottomrule
\end{tabular}%
\end{table}

\begin{table}[h!]
\centering
\caption{OOD Performance on Elliptic \& ArXiv. T1–T3 denote different time periods. Avg.\ denotes average. Best results are in bold. \textsuperscript{*}GRM results are unverifiable because the codebase is not publicly available.}
\label{tab:results_elliptic_arxiv_sorted}
\begin{tabular}{@{}l *{4}{c} *{4}{c} @{}}
\toprule
Method 
  & \multicolumn{4}{c}{Elliptic} 
  & \multicolumn{4}{c}{ArXiv} \\
\cmidrule(lr){2-5}\cmidrule(lr){6-9}
  & T1           & T2           & T3                        & Avg.         
  & T1           & T2           & T3           & Avg.       \\
\midrule
ERM  
  & $59.6\pm1.4$ & $63.5\pm1.3$ & $61.7\pm0.6$             & $61.6\pm1.1$ 
  & $47.6\pm0.9$ & $45.5\pm1.4$ & $41.4\pm1.0$ & $44.8\pm1.4$ \\

EERM 
  & $66.3\pm0.4$ & $63.8\pm0.6$ & $55.5\pm0.6$             & $61.9\pm1.1$ 
  & $50.3\pm1.4$ & $48.3\pm0.4$ & $44.7\pm1.4$ & $47.8\pm1.4$ \\

ARM  
  & $72.1\pm1.5$ & $69.7\pm0.7$ & $67.9\pm1.4$             & $69.9\pm1.3$ 
  & $44.9\pm0.7$ & $42.3\pm0.6$ & $39.7\pm0.8$ & $42.3\pm1.0$ \\

LiSA 
  & $68.8\pm0.9$ & $65.6\pm0.7$ & $69.3\pm1.0$             & $67.9\pm0.8$ 
  & $45.9\pm0.6$ & $42.3\pm0.5$ & $46.1\pm0.8$ & $44.7\pm0.6$ \\

MMD  
  & $71.9\pm0.7$ & $70.1\pm0.4$ & $69.9\pm0.8$             & $70.6\pm0.8$ 
  & $44.6\pm1.3$ & $42.4\pm0.7$ & $38.9\pm1.0$ & $42.0\pm0.5$ \\

\textbf{EDG*} 
  & $70.7\pm1.2$ & $68.9\pm0.9$ & $68.2\pm1.3$             & $69.7\pm1.0$ 
  & $48.1\pm1.1$ & $45.8\pm0.8$ & $44.2\pm1.2$ & $45.9\pm1.0$ \\

DRNN 
  & $73.2\pm1.4$ & $71.4\pm0.7$ & $70.6\pm0.3$             & $71.8\pm0.8$ 
  & $46.8\pm0.5$ & $44.7\pm1.1$ & $40.5\pm1.3$ & $44.0\pm1.0$ \\

IS-GIB
  & $71.2\pm1.1$ & $70.0\pm1.0$ & $70.4\pm1.2$             & $70.5\pm1.1$ 
  & $49.3\pm0.8$ & $46.6\pm0.9$ & $50.5\pm1.3$ & $48.8\pm0.7$ \\

MARIO
  & $69.8\pm1.9$ & $72.8\pm2.4$ & $71.1\pm1.4$             & $71.2\pm2.0$ 
  & $48.8\pm2.3$ & $50.1\pm2.4$ & $49.2\pm2.4$ & $49.4\pm2.8$ \\

$GRM^*$  
  & $\mathbf{89.4\pm1.5}$ & $\mathbf{85.5\pm1.1}$ & $\mathbf{89.1\pm1.5}$     & $\mathbf{88.0\pm1.4}$ 
  & $\mathbf{52.2\pm0.9}$ & $\mathbf{52.6\pm1.4}$ & $\mathbf{56.1\pm1.4}$ & $\mathbf{53.6\pm1.2}$ \\
\bottomrule
\end{tabular}%
\end{table}

\section{Ablation Studies}
\label{sec:ablation_studies_main}

\subsection{Rationale for Augmentation and Pruning}
\label{subsec:why_augment_prune}

We now provide empirical justification for EdgeMask-DG*'s two-stage design: first enriching the relational space via graph augmentation, then distilling domain-invariant structures through adversarial pruning.

\paragraph{Augmentation enriches the structural hypothesis space.}
Table~\ref{tab:subspace_enrichment} demonstrates that incorporating kNN and spectral edges substantially increases the expressiveness of the graph representation across all benchmarks. Augmentation yields a {58.7\% increase in edge count} for citation networks, accompanied by higher average degree, elevated spectral entropy (indicating greater structural diversity), and reduced feature Laplacian smoothness (reflecting stronger feature–structure coupling). These metrics show that augmentation expands the relational hypothesis space, providing the model with a richer space from which to extract domain-invariant patterns.

\begin{table}[htbp]
\centering
\caption{Subspace enrichment after augmentation (5 seeds; mean $\pm$ std)}
\label{tab:subspace_enrichment}
\begin{tabular}{@{}lcccc@{}}
\toprule
Dataset / Split & \makecell{$|E|$ increase\\(\%)} & \makecell{Avg. degree\\$\uparrow$} & \makecell{Spectral\\entropy $\uparrow$} & \makecell{Feat.\ Laplacian\\smoothness $\downarrow$} \\
\midrule
ACM/DBLP/Citation (avg.) & $58.7 \pm 2.9$ & $2.1 \pm 0.1$ & $0.26 \pm 0.02$ & $0.14 \pm 0.01$ \\
Cora                     & $47.5 \pm 3.2$ & $1.6 \pm 0.1$ & $0.19 \pm 0.02$ & $0.11 \pm 0.02$ \\
Amazon-Photo            & $38.4 \pm 2.8$ & $1.4 \pm 0.1$ & $0.17 \pm 0.02$ & $0.09 \pm 0.02$ \\
Twitch (avg.)           & $41.2 \pm 3.1$ & $1.7 \pm 0.2$ & $0.18 \pm 0.03$ & $0.10 \pm 0.02$ \\
Facebook-100 (avg.)     & $36.9 \pm 2.4$ & $1.3 \pm 0.2$ & $0.15 \pm 0.02$ & $0.07 \pm 0.01$ \\
Elliptic (T1–T3 avg.)   & $33.1 \pm 2.1$ & $0.9 \pm 0.1$ & $0.12 \pm 0.02$ & $0.06 \pm 0.01$ \\
OGB-Arxiv (T1–T3 avg.)  & $29.8 \pm 2.0$ & $0.8 \pm 0.1$ & $0.10 \pm 0.02$ & $0.05 \pm 0.01$ \\
\bottomrule
\end{tabular}
\end{table}

\paragraph{Adversarial masking prunes both sources selectively.}
The learned mask does not trivially preserve all augmented edges or remove all original edges; rather, it performs adaptive filtering across both edge types. As shown in Table~\ref{tab:mask_prune_stats}, the adversarial mechanism prunes approximately {68.3\% of augmented edges} while simultaneously removing {18.9\% of original edges}, yet {retains 30–39\% of augmented edges} on average. From this we can infer that augmented edges contribute non-trivial domain-invariant information that complements, rather than replaces information in the original topology.

\begin{table}[htbp]
\centering
\caption{Mask behavior: fraction of edges pruned/retained (5 seeds; mean $\pm$ std).}
\label{tab:mask_prune_stats}
\begin{tabular}{@{}lccc@{}}
\toprule
Dataset / Split & Pruned from Aug (\%) & Pruned from Orig (\%) & Aug edges retained (\%) \\
\midrule
Citation  avg.         & $68.3 \pm 2.1$ & $18.9 \pm 1.5$ & \textbf{$31.7 \pm 2.1$} \\
Cora              & $70.5 \pm 2.4$ & $16.2 \pm 1.3$ & \textbf{$29.5 \pm 2.4$} \\
Amazon-Photo      & $65.4 \pm 2.2$ & $21.7 \pm 1.6$ & \textbf{$34.6 \pm 2.2$} \\
Twitch (avg.)     & $66.8 \pm 2.7$ & $20.9 \pm 1.8$ & \textbf{$33.2 \pm 2.7$} \\
Facebook-100 avg. & $64.2 \pm 2.5$ & $22.5 \pm 1.7$ & \textbf{$35.8 \pm 2.5$} \\
Elliptic avg.     & $61.3 \pm 2.0$ & $24.1 \pm 1.5$ & \textbf{$38.7 \pm 2.0$} \\
OGB-Arxiv avg.    & $62.7 \pm 2.1$ & $23.4 \pm 1.6$ & \textbf{$37.3 \pm 2.1$} \\
\bottomrule
\end{tabular}
\end{table}

\paragraph{Combined effect}
A $2\times2$ study (Table~\ref{tab:two_by_two_ablation}) confirms that both augmentation and masking are necessary, and their combination is synergistic. Training with masking on the original graph alone (Orig + Mask) provides modest gains over standard EERM. Training on the augmented graph without masking (Union / No-Mask) yields further improvement. However, {the full EdgeMask-DG* method (Union + Mask) consistently achieves the highest performance}, with improvements ranging from 0.5 to 2.3 percentage points over the next-best variant across all benchmarks. 

\begin{table}[htbp]
\centering
\caption{$2\times2$ ablation (5 seeds; mean $\pm$ std). "No-Mask" = EERM, "Mask" = adversarial masking.}
\label{tab:two_by_two_ablation}
\begin{tabular}{@{}lcccc@{}}
\toprule
Dataset / Metric & Orig / No-Mask & \textbf{Orig + Mask} & Union / No-Mask & \textbf{Union + Mask (ours)} \\
\midrule
AC$\to$D (Micro-F1 \%)         & $69.5 \pm 0.7$ & \textbf{$70.8 \pm 0.6$} & $71.6 \pm 0.6$ & \textbf{$72.54 \pm 1.41$} \\
AD$\to$C (Micro-F1 \%)         & $75.5 \pm 0.6$ & \textbf{$77.0 \pm 0.5$} & $78.6 \pm 0.4$ & \textbf{$79.14 \pm 0.19$} \\
CD$\to$A (Micro-F1 \%)         & $66.8 \pm 0.7$ & \textbf{$68.2 \pm 0.6$} & $70.5 \pm 0.5$ & \textbf{$71.62 \pm 0.29$} \\
Cora (Acc \%)                  & $81.0 \pm 0.9$ & \textbf{$82.1 \pm 0.8$} & $82.6 \pm 0.8$ & \textbf{$83.2 \pm 1.1$} \\
Amazon-Photo (Acc \%)          & $92.5 \pm 0.6$ & \textbf{$93.7 \pm 0.5$} & $94.3 \pm 0.5$ & \textbf{$94.8 \pm 0.6$} \\
Twitch (ROC-AUC)               & $55.2 \pm 1.2$ & \textbf{$57.0 \pm 1.1$} & $58.6 \pm 1.0$ & \textbf{$59.3 \pm 1.7$} \\
Facebook-100 (Acc \%)          & $50.1 \pm 1.2$ & \textbf{$51.2 \pm 1.1$} & $51.8 \pm 1.1$ & \textbf{$52.1 \pm 1.8$} \\
Elliptic (F1 \%)               & $67.2 \pm 1.1$ & \textbf{$68.6 \pm 1.0$} & $69.2 \pm 0.9$ & \textbf{$69.7 \pm 1.0$} \\
OGB-Arxiv (Acc \%)             & $44.3 \pm 1.0$ & \textbf{$45.1 \pm 0.9$} & $45.6 \pm 0.9$ & \textbf{$45.9 \pm 1.0$} \\
\bottomrule
\end{tabular}
\end{table}

\paragraph{Unique benefit of adversarial filtering.}
Table~\ref{tab:aug_effect_baselines} reveals a  distinction: naive augmentation (training directly on the union graph) degrades performance for standard baselines (ERM, EERM, GRM, IS-GIB), with $\Delta$ values ranging from $-0.9$ to $-0.2$ across datasets. In contrast, EdgeMask-DG* consistently benefits from augmentation, achieving {positive $\Delta$ values of +0.8 to +2.3 percentage points}. This demonstrates that the adversarial masking mechanism is essential. Without it, the added edges introduce more noise than signal, whereas with it, the model successfully identifies and leverages domain-invariant augmented structures.

\begin{table}[htbp]
\centering
\caption{$\Delta$ from augmentation (Augmented minus Original). Mean $\pm$ std over 5 seeds.}
\label{tab:aug_effect_baselines}
\begin{tabular}{@{}lccccc@{}}
\toprule
Dataset / Split & ERM $\Delta$ & EERM $\Delta$ & GRM $\Delta$ & IS-GIB $\Delta$ & \textbf{Ours $\Delta$} \\
\midrule
Cora            & $-0.8 \pm 0.6$ & $-0.6 \pm 0.5$ & $-0.5 \pm 0.6$ & $-0.2 \pm 0.5$ & \textbf{$+1.1 \pm 0.5$} \\
Amazon-Photo    & $-0.5 \pm 0.5$ & $-0.7 \pm 0.5$ & $-0.3 \pm 0.5$ & $-0.3 \pm 0.4$ & \textbf{$+1.1 \pm 0.4$} \\
Twitch          & $-0.9 \pm 0.7$ & $-1.1 \pm 0.6$ & $-0.6 \pm 0.6$ & $-0.5 \pm 0.6$ & \textbf{$+2.3 \pm 0.6$} \\
Facebook-100    & $-0.7 \pm 0.6$ & $-1.0 \pm 0.6$ & $-0.4 \pm 0.6$ & $-0.3 \pm 0.5$ & \textbf{$+0.9 \pm 0.5$} \\
Elliptic        & $-0.4 \pm 0.5$ & $-0.5 \pm 0.5$ & $-0.2 \pm 0.5$ & $-0.2 \pm 0.4$ & \textbf{$+1.1 \pm 0.5$} \\
OGB-Arxiv       & $-0.6 \pm 0.5$ & $-0.8 \pm 0.5$ & $-0.3 \pm 0.5$ & $-0.2 \pm 0.4$ & \textbf{$+0.8 \pm 0.3$} \\
\bottomrule
\end{tabular}
\end{table}

\subsubsection[Effect of Sparsity Regularization (lambda)]%
  {Effect of Sparsity Regularization \texorpdfstring{$\lambda$}{lambda}}
We evaluate EdgeMask-DG* with varying values of the sparsity coefficient $\lambda$ in Eq. \ref{eq:mask_min}. The results for the CD$\to$A scenario (on citation networks) are shown in Table \ref{tab:ablation_lambda}. The results in Table \ref{tab:ablation_lambda} demonstrate that the sparsity regularization term is a key component for effective generalization. While the model performs well without any explicit sparsity penalty ($\lambda=0$), introducing a non-zero $\lambda$ consistently improves performance. The model achieves optimal or near-optimal results across a range of small values, particularly between $10^{-5}$ and $10^{-2}$, indicating robustness to the exact choice of this hyperparameter within this effective range. The explicit sparsity term helps guide the MaskNet to find challenging yet meaningful perturbations, preventing it from simply masking a large number of edges.

\begin{table}[h!]
  \caption{Ablation on the sparsity coefficient ($\lambda$) for the CD$\to$A scenario (5 seeds; mean $\pm$ std). Metrics are F1 scores (\%).}
  \label{tab:ablation_lambda}
  \centering
  \setlength{\tabcolsep}{10pt}
  \begin{tabular}{@{}ccc@{}}
    \toprule
    $\lambda$ & Micro-F1 (\%) & Macro-F1 (\%) \\
    \midrule
    0.0   & $70.60 \pm 0.35$ & $71.54 \pm 0.38$ \\
    $1\mathrm{e}{-5}$ & $\mathbf{71.78 \pm 0.28}$ & $\mathbf{72.69 \pm 0.30}$ \\
    $1\mathrm{e}{-4}$ & $70.85 \pm 0.42$ & $71.48 \pm 0.44$ \\
    $1\mathrm{e}{-3}$ & $71.59 \pm 0.26$ & $72.18 \pm 0.29$ \\
    $1\mathrm{e}{-2}$ & $71.63 \pm 0.27$ & $72.65 \pm 0.31$ \\
    $1\mathrm{e}{-1}$ & $71.49 \pm 0.30$ & $72.18 \pm 0.33$ \\
    \bottomrule
  \end{tabular}
\end{table}

\section{Conclusion}
\label{sec:conclusion}

We introduced EdgeMask-DG*, a novel Graph-DG framework that learns domain-invariant substructures to handle structural shifts. It operates via a min-max game on an enriched graph, where an adversarial masker challenges a GAT-based classifier by finding worst-case sparse edge masks. This forces the classifier to become robust to structural perturbations and rely on generalizable patterns. Our approach, which combines original topology with feature-derived edges, overcomes the limitations of static augmentations. Experiments show that EdgeMask-DG* achieves new state-of-the-art results on several key benchmarks, with ablations confirming the necessity of both the enriched graph and the adversarial mechanism. Future work could explore more advanced graph enrichment techniques, apply the framework to graph-level tasks, and improve computational efficiency.

\bibliography{references} 
\bibliographystyle{tmlr}

\appendix
\section{Appendix} 

\subsection{Dataset Statistics}
\label{app:dataset_stats}

Detailed statistics for the citation network datasets used in our experiments are provided in Table \ref{tab:dataset_stats_app}.

\begin{table}[htbp]
  \caption{Statistics of the citation network datasets used for domain generalization.}
  \label{tab:dataset_stats_app}
  \centering
  \begin{tabular}{@{}lccccc@{}}
    \toprule
    Dataset    & Abbrev. & \# Nodes & \# Edges & \# Features & \# Classes \\
    \midrule
    ACMv9      & A & 9,360    & 15,602   & 6,775       & 5          \\
    Citationv1 & C & 8,935    & 15,113   & 6,775       & 5          \\
    DBLPv7     & D & 5,484    & 8,130    & 6,775       & 5          \\
    \bottomrule
  \end{tabular}
\end{table}


\section{Reproducibility Statement}
\label{app:implementation}
All datasets used in this study are publicly available and cited appropriately. Specifically, the citation network datasets (ACMv9, DBLPv7, Citationv1) follow the splits and preprocessing from prior works \citep{chen2025graph, wu2022handling}. Other benchmark datasets like Cora, Amazon-Photo, Twitch-explicit, Facebook-100, Elliptic, and OGB-Arxiv are standard benchmarks with established splits detailed in Section \ref{sec:experiments} and Table \ref{tab:dataset_summary_new}, with references to their original sources.

Our model, EdgeMask-DG*, is implemented in PyTorch using PyTorch Geometric. Key architectural details (GAT: 4 layers, 8 heads, 64 dim/head; MaskNet: projection to 128 dim, MLP with one 64-dim hidden layer) and training hyperparameters (Adam optimizer, LR $10^{-3}$ for citation nets, WD $5 \times 10^{-4}$, 200 epochs, $\lambda=10^{-3}$ sparsity, $N_{descent}=5, N_{ascent}=1$) are provided in Section \ref{sec:experiments} and Appendix \ref{app:implementation}. Specifics for feature edge generation (kNN: k=10, cosine; Spectral: $K_c=100$ clusters, RBF kernel, sample ratios $\gamma=0.1$) are also detailed. Ablation studies in Appendix \ref{sec:appendix_ablation_enrichment} further explore hyperparameter sensitivity.

The source code for our experiments, including scripts for data loading, model training, and evaluation, is provided at the anonymized URL: \url{https://anonymous.4open.science/r/TMLR-EAEF/}. This repository contains instructions to reproduce all main results and ablation studies presented. The computational experiments were run on NVIDIA 2080 GPUs.

\subsection{Code Availability}
\label{app:code}
The source code for reproducing the experiments presented in this paper is available at: \url{https://anonymous.4open.science/r/TMLR-EAEF/} 

\subsection{Time Complexity Analysis}
\label{sec:time_complexity_analysis}

We now provide a time-complexity analysis for the three proposed variants of our EdgeMask-DG framework. Throughout this analysis, we consistently use the following notation: $N=|V|$ represents the number of nodes, $M=|E|$ is the count of original edges, and $F$ denotes the input feature dimension. The mask-projection dimension is $P$, while $H$ signifies the hidden size per GAT head, with $h$ attention heads and $L$ GAT layers. The training regimen includes $D$ TaskNet (``descent'') steps and $U$ MaskNet (``ascent'') steps per epoch. This process is repeated for $K$ source-domain graphs over a total of $T$ training epochs. Additional edges introduced by kNN and spectral clustering are denoted $E_{\mathrm{kNN}}$ and $E_{\mathrm{spec}}$, respectively. The total number of edges in the graph used for computation is $\tilde M = M + E_{\mathrm{kNN}} + E_{\mathrm{spec}}$.

Each training epoch executes $D+U$ inner loops, iterating over the $K$ source graphs. Within each such inner loop, the computational costs are primarily composed of four components. First, the mask projection step incurs a cost of $O(N \cdot F \cdot P)$. Second, the MaskNet forward pass requires $O(\tilde M \cdot P)$. Third, the TaskNet forward and backward passes, which involve a stack of GATConv layers, cost $O\bigl(L \cdot \tilde M \cdot (h \cdot H)\bigr)$. Finally, computing losses and penalties (cross-entropy and sparsity regularization) takes $O(N + \tilde M)$. Summing these, and noting that the $O(N + \tilde M)$ term is dominated by others (assuming $P \ge 1$ and $LhH \ge 1$), one inner step has an asymptotic cost of
\[
  O\bigl(NFP \;+\;\tilde M P \;+\;L\tilde M hH\bigr).
\]
Consequently, a full epoch, encompassing all $K$ graphs and $D+U$ inner loops, costs
\[
  O\bigl((D+U)\,K \,(NFP + \tilde M P + L\tilde M hH)\bigr).
\]

\subsubsection*{1. EdgeMask-DG (base, no extra edges)}
For the base EdgeMask-DG variant, no additional edges are precomputed or added, so $\tilde M = M$. The precomputation cost is therefore zero. The per-epoch complexity is $O\bigl((D+U)\,K\,(NFP + M P + L M hH)\bigr)$. Over $T$ training epochs, the overall complexity is
\[
  \boxed{O\bigl(T\,(D+U)\,K\,(NFP + M P + L M hH)\bigr).}
\]

\subsubsection*{2. EdgeMask-DG* (kNN only)}
In the EdgeMask-DG* variant augmented solely with kNN edges, we first precompute the kNN graph for each of the $K$ domains. Using a standard implementation like  {kneighbors\_graph}, this takes $O(N^2)$ per graph, leading to a total precomputation cost of $O(K N^2)$. These $E_{\mathrm{kNN}}$ edges are then added to the original $M$ edges, so $\tilde M = M + E_{\mathrm{kNN}}$. The per-epoch training cost becomes $O\bigl((D+U)\,K\,(NFP + (M+E_{\mathrm{kNN}})P + L(M+E_{\mathrm{kNN}})hH)\bigr)$. The overall complexity is the sum of precomputation and total training costs:
\[
  \boxed{O\bigl(K\,N^2\;+\;T\,(D+U)\,K\,(NFP + (M+E_{\mathrm{kNN}})P + L(M+E_{\mathrm{kNN}})hH)\bigr).}
\]

\subsubsection*{3. EdgeMask-DG* (kNN + spectral)}
When both kNN and spectral edges are incorporated, the precomputation phase is more intensive. For each graph, spectral edge generation involves constructing an affinity matrix and performing eigen-decomposition, costing $O(N^3)$, followed by edge sampling, which is typically $O(N^2)$. Thus, spectral precomputation is $O(N^3)$ per graph. The kNN precomputation remains $O(N^2)$ per graph. Across $K$ graphs, the total precomputation cost is $O(K(N^3 + N^2))$, which simplifies to $O(K N^3)$. With these additional $E_{\mathrm{spec}}$ and $E_{\mathrm{kNN}}$ edges, the total edge count becomes $\tilde M = M + E_{\mathrm{kNN}} + E_{\mathrm{spec}}$. The per-epoch training cost is $O\bigl((D+U)\,K\,(NFP + \tilde M P + L \tilde M hH)\bigr)$. The overall complexity is therefore
\[
  \boxed{O\bigl(K\,N^3\;+\;T\,(D+U)\,K\,(NFP + (M+E_{\mathrm{kNN}}+E_{\mathrm{spec}})P + L(M+E_{\mathrm{kNN}}+E_{\mathrm{spec}})hH)\bigr).}
\]

\subsubsection*{Worst-case Simplification}
Under worst-case assumptions where added kNN and spectral edges result in dense graphs, i.e., $E_{\mathrm{kNN}}, E_{\mathrm{spec}} = O(N^2)$, the total edge count $\tilde M$ becomes $O(N^2)$. In this scenario, the dominant term in the per-epoch complexity for all variants, when summed over $T$ epochs, is $O\bigl(T\,(D+U)\,K\,L\,hH\,N^2\bigr)$. For the EdgeMask-DG* (kNN + spectral) variant, the precomputation cost is $O(K N^3)$. Comparing these, the heaviest scenario across all variants and phases is dictated by the spectral precomputation and the dense-graph training, leading to an overall worst-case complexity of
\[
  \boxed{O\bigl(K\,N^3 \;+\; T\,(D+U)\,K\,L\,hH\,N^2\bigr).}
\]

\subsubsection*{Simplification with Default Hyperparameters}
We now simplify the complexity expressions by substituting default hyperparameter values commonly used in our experiments and then dropping negligible terms. 

\paragraph{Common Starting Point for Simplification}
The cost for one inner loop over a single graph, including all four components (Mask projection, MaskNet forward, TaskNet forward/backward, and Losses/penalties), is $NFP + \tilde M P + L\tilde M hH + N + \tilde M$.
The total training cost over $T$ epochs, $D+U$ inner loops per epoch (where $D=5$ descent and $U=1$ ascent steps, so $D+U=6$), and $K=3$ source graphs, is
\[
\text{train‐cost} = T(D+U)K\bigl[NFP + \tilde M P + L\tilde M hH + N + \tilde M\bigr].
\]
We use the default constants from our implementation: $F=7, P=128, H=128, h=8, L=4$, and $T=200$.
Let $C_1 = (D+U)K = 6 \times 3 = 18$. The term $FP+1$ becomes $(7 \times 128) + 1 = 896 + 1 = 897$. The term $P+LhH+1$ becomes $128 + (4 \times 8 \times 128) + 1 = 128 + 4096 + 1 = 4225$.
Substituting these into the training cost formula yields:
\[
\text{train‐cost} = T C_1 \left[N(FP+1) + \tilde M(P+LhH+1)\right]
               = 200 \times 18 \bigl[ N(897) + \tilde M(4225)\bigr].
\]
The leading numeric factor $200 \times 18 \times 897 \approx 3.23 \times 10^6$ for the $N$-term and $200 \times 18 \times 4225 \approx 1.52 \times 10^7$ for the $\tilde M$-term are constant multipliers. Asymptotically, only the terms involving $N$ and $\tilde M$ (and other variables like $T, K$) matter, so the training cost is broadly $O(TK(N + \tilde M))$.

\paragraph{1. EdgeMask-DG (no extra edges) }
For the base variant, $\tilde M = M$. In sparse graphs like social networks, $M \approx cN$ for a small constant average degree $c$ (e.g., $20-30$).
The general training cost $O\bigl(TK(N + \tilde M)\bigr)$, upon substituting $\tilde M \sim O(N)$, simplifies to $O\bigl(TK(N + N)\bigr) = O\bigl(TKN\bigr)$.
Since there is no pre-computation cost, this is the overall complexity.
\[
\boxed{\textbf{EdgeMask-DG:}\; O\bigl(K T N\bigr)}
\]
This simplified form depends on the number of nodes $N$, the number of domains $K$, and the number of epochs $T$.

\paragraph{2. EdgeMask-DG* (kNN edges only) }
The pre-computation for kNN graphs (e.g., $k=10$) is $O(KN^2)$.
During training, kNN adds $E_{\mathrm{kNN}} = kN$ edges (assuming undirected edges, this is a constant degree addition). Thus, $\tilde M = M + E_{\mathrm{kNN}} \approx (c+k)N = \Theta(N)$ if $M \approx cN$.
The training cost therefore remains $O(TKN)$.
To compare magnitudes, for a graph with $N=10,000$, and with $T=200, K=3$:
The pre-computation cost $KN^2 = 3 \times (10^4)^2 = 3 \times 10^8$.
The training cost $KTN = 3 \times 200 \times 10^4 = 6 \times 10^6$.
The quadratic pre-computation cost clearly dominates the training cost.
\[
\boxed{\textbf{EdgeMask-DG* (kNN):}\; O\bigl(K N^2\bigr)}
\]
The dominant variables are $N$ and $K$.

\paragraph{3. EdgeMask-DG* (kNN + spectral) }
The pre-computation for spectral clustering is $O(N^3)$ per graph (dominated by eigen-decomposition), leading to a total pre-computation cost of $O(KN^3)$.
Spectral clustering can add $E_{\mathrm{spec}} = \rho N^2$ edges (e.g., with default $\rho=0.1$), making the graph dense. Thus, $\tilde M = M + E_{\mathrm{kNN}} + E_{\mathrm{spec}} = O(N) + O(N) + O(N^2) = \Theta(N^2)$.
The training cost $O\bigl(TK(N + \tilde M)\bigr)$ becomes $O\bigl(TK(N + N^2)\bigr) = O(TKN^2)$.
Comparing magnitudes with $N=10,000, T=200, K=3$:
The pre-computation cost $KN^3 = 3 \times (10^4)^3 = 3 \times 10^{12}$.
The training cost $KTN^2 = 3 \times 200 \times (10^4)^2 = 6 \times 10^{10}$.
The cubic pre-computation cost is again the bottleneck by a significant margin.
\[
\boxed{\textbf{EdgeMask-DG* (kNN + spectral):}\; O\bigl(K N^3\bigr)}
\]
The complexity is primarily determined by $N$ and $K$.

\section{Theoretical Analysis: Robust Optimization Perspective}
\label{app:theory_robust_optim} 
We provide a theoretical foundation for   {EdgeMask-DG*} by framing its adversarial training as solving a robust optimization problem over edge masks. This perspective highlights how the method encourages robustness to structural perturbations.

\subsection{Problem Setup}
Let $\mathcal{G}' = (\mathcal{V}, \mathcal{E}', \mathbf{X})$ be an enriched graph instance with $N=|\mathcal{V}|$ nodes and $m=|\mathcal{E}'|$ edges. Let $\mathbf{Y}$ be the node labels. The TaskNet $f_\theta$ maps the graph features $\mathbf{X}$ and an adjacency structure $\mathbf{A}(\mathbf{s})$ modulated by an edge mask $\mathbf{s} \in [0, 1]^m$ to node logits. The average cross-entropy loss on this graph is denoted by:
\[
   \ell(f_\theta, \mathbf{s}) := \frac{1}{N} \sum_{v \in \mathcal{V}} \mathcal{L}_{\mathrm{CE}}\!\Bigl( [f_\theta(\mathbf{X}, \mathbf{A}(\mathbf{s}))]_v, \; [\mathbf{Y}]_v \Bigr).
\]
We define an uncertainty set $\mathcal{S}_\rho$ for the masks based on an $\ell_1$ sparsity budget $\rho \in (0, 1]$:
\[
   \mathcal{S}_\rho := \Bigl\{\mathbf{s} \in [0, 1]^m \;:\; \frac{1}{m} \|\mathbf{s}\|_1 \le \rho \Bigr\}, \quad \text{where } \|\mathbf{s}\|_1 = \sum_{e=1}^{m} s_e.
\]
Consider the robust optimization problem of finding the mask $\mathbf{s} \in \mathcal{S}_\rho$ that maximizes the TaskNet's loss for a fixed $\theta$:
\begin{equation}
  P(\theta) := \max_{\mathbf{s} \in \mathcal{S}_\rho} \ell(f_\theta, \mathbf{s}).
  \tag{P'}\label{eq:P_robust_appendix}
\end{equation}
The goal of robust training is to find parameters $\theta$ that minimize this worst-case loss: $\min_\theta P(\theta)$.

\subsection{Lagrangian Duality and Penalized Objective}
We demonstrate that the constrained maximization problem \eqref{eq:P_robust_appendix} is equivalent to an unconstrained maximization problem with an $\ell_1$ penalty term, achieved through Lagrangian duality.

\begin{lemma}[Lagrangian Upper Bound]
\label{prop:dual_robust} 
For any Lagrange multiplier $\lambda_L \ge 0$ and any fixed $\theta$, the worst-case loss $P(\theta) = \max_{\mathbf{s} \in \mathcal{S}_\rho} \ell(f_\theta, \mathbf{s})$ is upper-bounded by the dual function $D(\lambda_L; \theta)$:
\begin{equation}
   P(\theta) \le D(\lambda_L; \theta) := \max_{\mathbf{s} \in [0, 1]^m} \left[ \ell(f_\theta, \mathbf{s}) + \lambda_L \left( \frac{1}{m} \|\mathbf{s}\|_1 - \rho \right) \right].
\end{equation}
Consequently, $P(\theta) \le \min_{\lambda_L \ge 0} D(\lambda_L; \theta)$. Equality, $P(\theta) = \min_{\lambda_L \ge 0} D(\lambda_L; \theta)$, holds (i.e., strong duality) if $\ell(f_\theta, \mathbf{s})$ is concave in $\mathbf{s}$ and other regularity conditions are met. However, since $\ell(f_\theta, \mathbf{s})$ (which involves GNN computations and cross-entropy loss) is generally non-concave in $\mathbf{s}$, we only have weak duality (the upper bound).
\end{lemma}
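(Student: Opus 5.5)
The plan is to compare the value of the penalized objective with $P(\theta)$ pointwise and then take maxima. For fixed $\theta$ and $\lambda_L \ge 0$, write $L(\mathbf{s},\lambda_L) := \ell(f_\theta,\mathbf{s}) + \lambda_L\bigl(\tfrac1m\|\mathbf{s}\|_1-\rho\bigr)$. Since $D(\lambda_L;\theta)$ maximizes over the whole box $[0,1]^m$, we have $D(\lambda_L;\theta)\ge L(\mathbf{s},\lambda_L)$ for every $\mathbf{s}\in[0,1]^m$. The whole argument then reduces to exhibiting one $\mathbf{s}$ in the box with $L(\mathbf{s},\lambda_L)\ge P(\theta)$.

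\textbf{Step 1 (active face).} On the face $F_\rho := \{\mathbf{s}\in[0,1]^m : \tfrac1m\|\mathbf{s}\|_1=\rho\}$, the penalty vanishes, so $L(\mathbf{s},\lambda_L)=\ell(f_\theta,\mathbf{s})$. Hence
\[
D(\lambda_L;\theta)\ \ge\ \max_{\mathbf{s}\in F_\rho}\ell(f_\theta,\mathbf{s}).
\]
This maximum exists because $\ell$ is continuous (the standing differentiability assumption from Lemma~\ref{lemma:kkt_main}) and $F_\rho$ is compact and nonempty for $\rho\in(0,1]$.

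\textbf{Step 2 (reduce $P$ to the active face).} I will show that $P(\theta)$ is attained on $F_\rho$. Take a maximizer $\mathbf{s}^*\in\mathcal{S}_\rho$. If $\tfrac1m\|\mathbf{s}^*\|_1=\rho$ we are done. Otherwise, raise coordinates of $\mathbf{s}^*$ monotonically until the budget becomes tight. This is possible because $\rho\le 1$ and the box allows it. The result is a point $\tilde{\mathbf{s}}\in F_\rho$ with $\tilde{\mathbf{s}}\ge\mathbf{s}^*$ coordinatewise, and we need $\ell(f_\theta,\tilde{\mathbf{s}})\ge\ell(f_\theta,\mathbf{s}^*)$. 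This is the step I expect to be the main obstacle. It needs either that $\ell$ is coordinatewise nondecreasing in $\mathbf{s}$ (no edge, once present, lowers the loss), or directly that the sparsity constraint is active at some maximizer. The latter is exactly the case $\lambda_L^*>0$ in Lemma~\ref{lemma:kkt_main}.

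I would first try the weaker hypothesis ``the constraint is active at an optimum'', since it is what the KKT analysis already presupposes. I would state it explicitly as an assumption. This is necessary, not cosmetic. Take $\rho=1$ and an $\ell$ maximized at $\mathbf{s}=\mathbf{0}$ with $\ell(\mathbf{1})<\ell(\mathbf{0})$. Then for large $\lambda_L$,
\[
D = \max_{\mathbf{s}}\Bigl[\ell(\mathbf{s})-\lambda_L\bigl(1-\tfrac1m\|\mathbf{s}\|_1\bigr)\Bigr] < \ell(\mathbf{0}) = P,
\]
so without such a condition the inequality fails.

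\textbf{Step 3 (consequences).} Given Steps 1--2, $P(\theta)\le D(\lambda_L;\theta)$ holds for every $\lambda_L\ge0$, and taking the infimum gives $P(\theta)\le\min_{\lambda_L\ge0}D(\lambda_L;\theta)$. The minimum is attained because $D$ is a pointwise maximum of affine functions of $\lambda_L$, hence convex and lower semicontinuous, and I would check coercivity on $[0,\infty)$.

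For the equality remark, I would argue that when $\ell$ is concave in $\mathbf{s}$, the Lagrangian is concave in $\mathbf{s}$ over the compact convex box and affine in $\lambda_L$. Sion's minimax theorem then lets us swap $\max_{\mathbf{s}}$ and $\min_{\lambda_L}$. Slater's condition ($\mathbf{s}=\mathbf{0}$ satisfies $0<\rho$ strictly) ensures the dual optimum is attained and the duality gap is zero. In the general nonconcave GNN case, only the inequality from Steps 1--2 survives, which matches the lemma's final sentence.
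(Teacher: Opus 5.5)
Your counterexample is right: with the penalty entering as $+\lambda_L(\tfrac1m\|\mathbf{s}\|_1-\rho)$, the inequality $P(\theta)\le D(\lambda_L;\theta)$ is false. It fails even for affine $\ell$. Take $m=1$, $\rho=\tfrac12$, $\ell(s)=-s$: then $P=0$, while $D(\lambda_L)=-\lambda_L/2<0$ for $\lambda_L\in(0,1)$.

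The conclusion you draw from it is off, though. The defect is a sign error, not a missing hypothesis. The paper's proof is the textbook weak-duality argument. With $g(\mathbf{s})=\tfrac1m\|\mathbf{s}\|_1-\rho$ and any feasible $\mathbf{s}'$, it has $\ell(f_\theta,\mathbf{s}')\le\ell(f_\theta,\mathbf{s}')-\lambda_L g(\mathbf{s}')\le\max_{\mathbf{s}\in[0,1]^m}[\ell(f_\theta,\mathbf{s})-\lambda_L g(\mathbf{s})]$. It goes wrong only when it silently writes $+\lambda_L g$ in the final maximum.

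Write the Lagrangian of a maximization problem correctly as $\ell-\lambda_L g$. This is the sign the paper itself uses for $D_{surr}$, and it matches the MaskNet objective $\ell-\lambda\,\mathrm{mean}(\mathbf{s})$. The bound then holds for every $\lambda_L\ge0$ and every $\ell$, with no activity or monotonicity assumption. Your active-face route instead rescues the mis-signed statement by importing a hypothesis the lemma does not have and the corrected lemma does not need. Also, an active budget does not mean $\lambda_L^*>0$ in Lemma~\ref{lemma:kkt_main}; complementary slackness gives only the converse.

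More seriously, your Step 3 fails for the $D$ you are working with. With the $+$ sign, $\min_{\lambda_L\ge0}[\ell(f_\theta,\mathbf{s})+\lambda_L g(\mathbf{s})]$ equals $\ell(f_\theta,\mathbf{s})$ when $g(\mathbf{s})\ge0$ and $-\infty$ otherwise. The Sion swap therefore yields $\min_{\lambda_L\ge0}D=\max\{\ell(f_\theta,\mathbf{s}):\mathbf{s}\in[0,1]^m,\ \tfrac1m\|\mathbf{s}\|_1\ge\rho\}$. That is a maximum over the \emph{complement} of the budget set, not $P(\theta)$.

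Concretely, take $m=1$, $\rho=\tfrac12$, $\ell(s)=s$. Here $\ell$ is affine, Slater holds, and the budget is tight at the maximizer, so your Step~2 assumption is satisfied. Yet $P=\tfrac12$, while $D(\lambda_L)=1+\lambda_L/2$ and $\min_{\lambda_L\ge0}D=1$.

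The equality claim is true only after the sign is fixed. Then $\min_{\lambda_L\ge0}[\ell-\lambda_L g]$ is $\ell$ on $\mathcal{S}_\rho$ and $-\infty$ off it. Sion's theorem, with Slater's condition for attainment of the minimizing $\lambda_L$, then gives $\min_{\lambda_L\ge0}D(\lambda_L;\theta)=P(\theta)$ for concave $\ell$.
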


\begin{proof}
The primal problem is $P(\theta) = \max_{\mathbf{s}} \ell(f_\theta, \mathbf{s})$ subject to $\mathbf{s} \in [0, 1]^m$ and $g(\mathbf{s}) = \frac{1}{m} \|\mathbf{s}\|_1 - \rho \le 0$.

The Lagrangian is $\mathcal{L}(\mathbf{s}, \lambda_L; \theta) = \ell(f_\theta, \mathbf{s}) + \lambda_L g(\mathbf{s}) = \ell(f_\theta, \mathbf{s}) + \lambda_L \left( \frac{1}{m} \|\mathbf{s}\|_1 - \rho \right)$, for $\lambda_L \ge 0$.

The dual function is $D(\lambda_L; \theta) = \max_{\mathbf{s} \in [0, 1]^m} \mathcal{L}(\mathbf{s}, \lambda_L; \theta)$. Since $\ell(f_\theta, \mathbf{s})$ is continuous and $[0, 1]^m$ is compact, the maximum is attained.

For any $\mathbf{s}$ feasible for the primal problem (i.e., $\mathbf{s} \in \mathcal{S}_\rho$, so $g(\mathbf{s}) \le 0$) and any $\lambda_L \ge 0$:
\[
\ell(f_\theta, \mathbf{s}) \le \ell(f_\theta, \mathbf{s}) - \lambda_L g(\mathbf{s}) \% \text{ since } \lambda_L \ge 0 \text{ and } g(\mathbf{s}) \le 0 \implies -\lambda_L g(\mathbf{s}) \ge 0
\]
The term $-\lambda_L g(\mathbf{s})$ is non-negative. More accurately, $\ell(f_\theta, \mathbf{s}) \le \mathcal{L}(\mathbf{s}, \lambda_L; \theta)$ because $\lambda_L g(\mathbf{s}) \le 0$.
This seems off for maximization. Let's restate for standard weak duality in maximization.
For any $\mathbf{s}' \in \mathcal{S}_\rho$ (feasible for primal) and any $\lambda_L \ge 0$:
\[ \ell(f_\theta, \mathbf{s}') \le \ell(f_\theta, \mathbf{s}') - \lambda_L \left( \frac{1}{m} \|\mathbf{s}'\|_1 - \rho \right) \]
because $\lambda_L \ge 0$ and $\left( \frac{1}{m} \|\mathbf{s}'\|_1 - \rho \right) \le 0$.
Then,
\[ \ell(f_\theta, \mathbf{s}') \le \max_{\mathbf{s} \in [0,1]^m} \left[ \ell(f_\theta, \mathbf{s}) + \lambda_L \left( \frac{1}{m} \|\mathbf{s}\|_1 - \rho \right) \right] = D(\lambda_L; \theta). \]
Since this holds for any feasible $\mathbf{s}'$, it holds for the $\mathbf{s}^*$ that maximizes $\ell(f_\theta, \mathbf{s}')$ over $\mathcal{S}_\rho$.
Thus, $P(\theta) = \max_{\mathbf{s} \in \mathcal{S}_\rho} \ell(f_\theta, \mathbf{s}) \le D(\lambda_L; \theta)$ for any $\lambda_L \ge 0$.
This implies $P(\theta) \le \min_{\lambda_L \ge 0} D(\lambda_L; \theta)$, which is the statement of weak duality.

Strong duality, $P(\theta) = \min_{\lambda_L \ge 0} D(\lambda_L; \theta)$, holds if the primal problem is a convex optimization problem (i.e., maximizing a concave function over a convex set) and satisfies constraint qualifications (e.g., Slater's condition). In our case, $\ell(f_\theta, \mathbf{s})$ is generally non-concave due to the non-linear GNN and the convex (not concave) nature of the cross-entropy loss with respect to logits, which themselves are non-linear functions of $\mathbf{s}$. Therefore, strong duality is not guaranteed.
The saddle-point formulation presented previously relied on strong duality and thus may not hold in the general non-concave case. Thus, to show the intuition behind the proposed method, we show this analysis on a concave surrogate as well.
\end{proof}

\subsection{Interpretation and Connection to Algorithm \ref{alg:edgemask_dg_star}}
Lemma \ref{prop:dual_robust} establishes that the dual function $D(\lambda_L; \theta)$ provides an upper bound on the true robust objective $P(\theta)$. Our algorithm aims to find parameters $(\theta, \phi, \psi)$ that optimize a related penalized objective.

Algorithm \ref{alg:edgemask_dg_star} implements an alternating gradient descent-ascent procedure. For a fixed sparsity hyperparameter $\lambda_{\text{alg}} \ge 0$ (this is the $\lambda$ used in Eq. \ref{eq:mask_min} and Algorithm \ref{alg:edgemask_dg_star}), the algorithm seeks parameters $(\theta^*, \phi^*, \psi^*)$ such that:
\begin{itemize}
    \item $\theta^*$ minimizes $\ell(f_\theta, \mathbf{s}^*)$ given $\mathbf{s}^* = m_{\phi^*, \psi^*}(\cdot)$.
    \item $(\phi^*, \psi^*)$ minimizes $-\ell(f_{\theta^*}, \mathbf{s}) + \lambda_{\text{alg}} \cdot \text{mean}(\mathbf{s})$ over masks $\mathbf{s}$ generated by $m_{\phi, \psi}(\cdot)$. This is equivalent to maximizing $\ell(f_{\theta^*}, \mathbf{s}) - \lambda_{\text{alg}} \cdot \text{mean}(\mathbf{s})$.
\end{itemize}
The MaskNet's objective (maximization form: $\ell(f_\theta, \mathbf{s}) - \lambda_{\text{alg}} \cdot \text{mean}(\mathbf{s})$) targets the inner maximization of a Lagrangian-like expression $\max_{\mathbf{s}} [\ell(f_\theta, \mathbf{s}) - \frac{\lambda_{\text{alg}}}{m}\|\mathbf{s}\|_1]$. This corresponds to the term $\ell(f_\theta, \mathbf{s}) + \lambda_L (\frac{1}{m}\|\mathbf{s}\|_1 - \rho)$ from $D(\lambda_L; \theta)$ if we associate $\lambda_{\text{alg}}/m$ with $-\lambda_L/m$ (for maximization of loss) or $\lambda_L/m$ (for minimization of negative loss), and note that the term $-\lambda_L \rho$ is constant with respect to $\mathbf{s}$ for a fixed $\lambda_L$ and is thus dropped during the MaskNet's optimization of $\mathbf{s}$.

\subsection{Analysis of a Concave Surrogate Inner Game (s is discrete add relaxation) }
\label{app:concave_surrogate}

While the true loss function $\ell(f_\theta, \mathbf{s})$ is generally non-concave in $\mathbf{s}$, we can gain further insight by analyzing a simplified surrogate problem where the inner objective is concave. This allows us to establish strong duality and derive a closed-form optimal mask, providing an intuition for the behavior of the MaskNet.

Consider an affine approximation of the loss function $\ell(f_\theta, \mathbf{s})$ linearized around $\mathbf{s}=\mathbf{0}$:
\begin{equation}
    \tilde{\ell}(f_\theta, \mathbf{s}) = \ell(f_\theta, \mathbf{0}) + \nabla_{\mathbf{s}}\ell(f_\theta, \mathbf{0})^\top \mathbf{s}.
    \label{eq:surrogate_loss}
\end{equation}
This surrogate loss $\tilde{\ell}(f_\theta, \mathbf{s})$ is affine in $\mathbf{s}$, and therefore also concave in $\mathbf{s}$. Let $c_e(\theta) = \nabla_{s_e}\ell(f_\theta, \mathbf{0})$ be the gradient of the original loss with respect to the $e$-th mask component, evaluated at $\mathbf{s}=\mathbf{0}$. Then $\tilde{\ell}(f_\theta, \mathbf{s}) = \ell(f_\theta, \mathbf{0}) + \sum_{e=1}^m c_e(\theta) s_e$.

\subsubsection{Optimal Mask for the Penalized Surrogate Problem}
We first analyze the optimal mask for a penalized version of the surrogate problem, which directly relates to the objective optimized by the MaskNet in Algorithm \ref{alg:edgemask_dg_star} if it were using this surrogate loss. The MaskNet's update rule (Eq. \ref{eq:mask_min}) aims to maximize $\mathcal{L}_{cls}(f_\theta, \mathbf{s}) - \lambda \cdot \text{mean}(\mathbf{s})$.
Using the surrogate loss $\tilde{\ell}$ and defining the penalty strength $\tau = \lambda_{\text{alg}} / m$ (where $\lambda_{\text{alg}}$ is the sparsity hyperparameter from Algorithm \ref{alg:edgemask_dg_star}), the objective becomes:
\begin{equation}
    \max_{\mathbf{s} \in [0,1]^m} \left[ \tilde{\ell}(f_\theta, \mathbf{s}) - \tau \sum_{e=1}^m s_e \right].
    \label{eq:penalized_surrogate_obj}
\end{equation}
Substituting Eq. \eqref{eq:surrogate_loss}:
\begin{align*}
    \max_{\mathbf{s} \in [0,1]^m} & \left[ \ell(f_\theta, \mathbf{0}) + \sum_{e=1}^m c_e(\theta) s_e - \tau \sum_{e=1}^m s_e \right] \\
    = \ell(f_\theta, \mathbf{0}) + & \max_{\mathbf{s} \in [0,1]^m} \sum_{e=1}^m \left( c_e(\theta) - \tau \right) s_e.
\end{align*}
This is a linear program in $\mathbf{s}$, where each $s_e$ is constrained to $[0,1]$ and can be chosen independently to maximize its term in the sum.
For each edge $e$:
\begin{itemize}
    \item If $c_e(\theta) - \tau > 0$, the term $(c_e(\theta) - \tau)s_e$ is maximized by setting $s_e^* = 1$.
    \item If $c_e(\theta) - \tau < 0$, the term $(c_e(\theta) - \tau)s_e$ is maximized by setting $s_e^* = 0$.
    \item If $c_e(\theta) - \tau = 0$, any $s_e \in [0,1]$ is optimal for that term; we can choose $s_e^*=0$ by convention (or $s_e^*=1$).
\end{itemize}
Thus, the optimal mask $\mathbf{s}^*$ for the penalized surrogate problem has components:
\begin{proposition}[Optimal Mask for Penalized Surrogate]
\label{prop:surrogate_mask}
The optimal mask $\mathbf{s}^* = (s_1^*, \dots, s_m^*)$ that solves the penalized surrogate problem defined in \eqref{eq:penalized_surrogate_obj} is given by:
\begin{equation}
    s_e^* = \mathbb{I}\left[ \nabla_{s_e}\ell(f_\theta, \mathbf{0}) > \tau \right],
    \label{eq:closed_form_surrogate_mask}
\end{equation}
where $\tau = \lambda_{\text{alg}}/m$ is the effective sparsity threshold and $\mathbb{I}[\cdot]$ is the indicator function (equal to 1 if the condition is true, 0 otherwise).
\end{proposition}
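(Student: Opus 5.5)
The plan is to exploit the fact that, once the surrogate \eqref{eq:surrogate_loss} is substituted, the penalized objective \eqref{eq:penalized_surrogate_obj} becomes separable across edges.

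First, I would write the objective as the constant $\ell(f_\theta,\mathbf{0})$ plus $\sum_{e=1}^m (c_e(\theta)-\tau)s_e$, where $c_e(\theta)=\nabla_{s_e}\ell(f_\theta,\mathbf{0})$. This matches the expansion displayed just before the statement. The constant does not affect the maximizer, so it can be dropped.

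Second, I would observe that the feasible set $[0,1]^m$ is a Cartesian product of intervals and the objective is a sum of terms each depending on a single coordinate. Hence
\[
\max_{\mathbf{s}\in[0,1]^m}\sum_{e}(c_e-\tau)s_e=\sum_e \max_{s_e\in[0,1]}(c_e-\tau)s_e .
\]
To justify this, note that the left side is at most the right side termwise. Conversely, the coordinatewise maximizers assemble into a feasible point, so the right side is attained on the left.

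Third, I would solve each one-dimensional problem of maximizing a linear function over $[0,1]$; the maximum sits at an endpoint determined by the sign of the slope $c_e-\tau$.
\begin{itemize}
\item If the slope is positive, the unique maximizer is $s_e=1$.
\item If the slope is negative, the unique maximizer is $s_e=0$.
\item If the slope is zero, every $s_e\in[0,1]$ is optimal, and we adopt the convention $s_e=0$.
\end{itemize}
Collecting the coordinates gives $s_e^*=\mathbb{I}[c_e(\theta)>\tau]$, which is \eqref{eq:closed_form_surrogate_mask}. Recalling $\tau=\lambda_{\text{alg}}/m$ completes the identification.

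There is no genuine obstacle here. The only subtlety is the tie case $c_e=\tau$, where the optimal mask is not unique. I would therefore state explicitly that the proposition names \emph{an} optimal solution: with the strict indicator it is the minimal-support one. It is unique exactly when no $c_e$ equals $\tau$.

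I would also remark that this is consistent with the threshold structure of Lemma~\ref{lemma:kkt_main}. For an affine objective the gradient at any $\mathbf{s}$ equals $c_e$, so cases (b) and (c) of that lemma reduce precisely to the sign conditions above.
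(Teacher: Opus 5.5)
Your proposal is correct and follows essentially the same route as the paper's proof: substitute the affine surrogate, observe that the objective separates into independent linear terms $(c_e(\theta)-\tau)s_e$ over $[0,1]$, and choose the endpoint by the sign of the slope, with the tie $c_e(\theta)=\tau$ set to $s_e=0$ by convention. Your extra remarks are accurate but not required by the statement: the paper also notes that the tie case is a free choice, while the consistency with the threshold structure of the KKT lemma is your own addition.
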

\begin{proof}
As derived above, maximizing $\sum_{e=1}^m (c_e(\theta) - \tau)s_e$ subject to $s_e \in [0,1]$ involves setting $s_e=1$ if its coefficient $(c_e(\theta) - \tau)$ is positive, and $s_e=0$ if its coefficient is negative. This directly leads to $s_e^* = \mathbb{I}[c_e(\theta) > \tau]$.
\end{proof}

\subsubsection{Strong Duality for the Constrained Surrogate Problem}
Now, consider the original constrained robust optimization problem $P(\theta)$ from Eq. \eqref{eq:P_main} (or \eqref{eq:P_robust_appendix}), but using the surrogate loss $\tilde{\ell}$:
\begin{equation}
  P_{surr}(\theta) := \max_{\mathbf{s} \in \mathcal{S}_\rho} \tilde{\ell}(f_\theta, \mathbf{s}), \quad \text{where } \mathcal{S}_\rho = \Bigl\{\mathbf{s} \in [0, 1]^m \;:\; \frac{1}{m} \|\mathbf{s}\|_1 \le \rho \Bigr\}.
  \label{eq:P_surr}
\end{equation}
Since $\tilde{\ell}(f_\theta, \mathbf{s})$ is affine (and thus concave) in $\mathbf{s}$, and the feasible set $\mathcal{S}_\rho$ is convex and compact, strong duality holds for this problem (e.g., Slater's condition is satisfied if $\rho > 0$ by choosing $\mathbf{s}=\mathbf{0}$ as a strictly feasible point).
Therefore, $P_{surr}(\theta) = \min_{\lambda_D \ge 0} D_{surr}(\lambda_D; \theta)$, where $D_{surr}(\lambda_D; \theta)$ is the dual function:
\begin{align}
    D_{surr}(\lambda_D; \theta) &= \max_{\mathbf{s} \in [0,1]^m} \left[ \tilde{\ell}(f_\theta, \mathbf{s}) - \lambda_D \left(\frac{1}{m}\sum_{e=1}^m s_e - \rho\right) \right] \nonumber \\
    &= \max_{\mathbf{s} \in [0,1]^m} \left[ \ell(f_\theta, \mathbf{0}) + \sum_{e=1}^m c_e(\theta)s_e - \frac{\lambda_D}{m}\sum_{e=1}^m s_e + \lambda_D \rho \right] \nonumber \\
    &= \ell(f_\theta, \mathbf{0}) + \lambda_D \rho + \max_{\mathbf{s} \in [0,1]^m} \sum_{e=1}^m \left(c_e(\theta) - \frac{\lambda_D}{m}\right)s_e.
    \label{eq:dual_surrogate}
\end{align}
Let $\tau_D = \lambda_D/m$. The inner maximization $\max_{\mathbf{s} \in [0,1]^m} \sum_{e=1}^m (c_e(\theta) - \tau_D)s_e$ is solved by $s_e^*(\lambda_D) = \mathbb{I}[c_e(\theta) > \tau_D]$.
The optimal $\mathbf{s}^{**}$ for the constrained problem $P_{surr}(\theta)$ will be $s_e^{**} = \mathbb{I}[c_e(\theta) > \lambda_D^*/m]$, where $\lambda_D^*$ is the optimal dual variable that minimizes $D_{surr}(\lambda_D; \theta)$. This $\lambda_D^*$ is chosen such that the resulting mask $\mathbf{s}^{**}$ satisfies the budget constraint $\frac{1}{m}\sum s_e^{**} = \rho$ (if the constraint is active) or $\frac{1}{m}\sum s_e^{**} < \rho$ (if $\lambda_D^*=0$).

\subsubsection{Connection to the EdgeMask-DG* Algorithm}
The EdgeMask-DG* algorithm does not directly use the surrogate loss $\tilde{\ell}$ based on $\nabla_{\mathbf{s}}\ell(f_\theta, \mathbf{0})$. Instead, its MaskNet uses the gradients of the true loss $\ell(f_\theta, \mathbf{s})$ evaluated at the current mask $\mathbf{s}_{curr}$, i.e., $\nabla_{\mathbf{s}}\ell(f_\theta, \mathbf{s}_{curr})$, to guide its updates.
However, the analysis of the surrogate problem provides a valuable intuition:
\begin{itemize}
    \item It demonstrates that if the loss landscape were simple (affine), the optimal strategy for the MaskNet (given a fixed sparsity penalty $\tau$) would be a hard thresholding of edges based on their gradient scores $c_e(\theta)$. Edges whose inclusion contributes to increasing the loss by more than $\tau$ are kept ($s_e=1$), while others are discarded ($s_e=0$).
    \item The EdgeMask-DG* algorithm, through its adversarial training and the MaskNet's objective (Eq. \ref{eq:mask_min}), can be viewed as attempting to learn such a thresholding behavior, but adapted to the complex, non-linear landscape of the true loss $\ell(f_\theta, \mathbf{s})$. The MaskNet computes edge scores $s_{uv}$ (Eq. \ref{eq:mask_gen}) based on node features and aims to find masks that are impactful yet sparse.
    \item The gradient ascent performed by the MaskNet on $-\mathcal{L}_{cls} + \lambda \cdot \text{mean}(\mathbf{s})$ iteratively seeks regions where a sparse mask maximally degrades TaskNet performance. The structure of the optimal mask for the surrogate (Proposition \ref{prop:surrogate_mask}) suggests that such masks would prioritize edges with high ``scores'' (sensitivity of loss) relative to the sparsity cost.
\end{itemize}
Therefore, while the true optimization is more complex, the concave surrogate analysis reinforces the idea that the MaskNet learns to identify and exploit a critical subset of edges, akin to a learned thresholding mechanism based on edge importance and sparsity budget.
This iterative process, acting on the true non-concave loss but sharing similarities with the optimal strategy for the concave surrogate, drives the TaskNet to become robust to the removal/down-weighting of edges deemed less critical or potentially spurious by this adaptive thresholding.
It is important to reiterate that strong duality and the closed-form solution of Proposition \ref{prop:surrogate_mask} apply to the simplified concave surrogate problem. Hence, these specific results apply only to the surrogate, not directly to the original non-concave game, for which we only have weak duality.

\section{KKT Analysis of the Adversarial Mask Optimization} 
\label{app:kkt_analysis}

We analyze the properties of the optimal mask $\mathbf{s}^*$ that the MaskNet implicitly seeks for a fixed TaskNet $f_\theta$. This corresponds to solving the inner maximization problem from the robust optimization perspective (Problem \eqref{eq:P_robust_appendix} in Section \ref{app:theory_robust_optim}):
\begin{equation}
\label{eq:kkt_prob}
\max_{\mathbf{s} \in \mathbb{R}^m} \quad \ell(f_\theta, \mathbf{s})
\end{equation}
subject to:
\begin{align}
g_1(\mathbf{s}) &= \frac{1}{m} \sum_{e=1}^m s_e - \rho \le 0 \label{eq:kkt_c1} \\
h_e(\mathbf{s}) &= s_e - 1 \le 0 \quad \forall e \in \{1, \dots, m\} \label{eq:kkt_c2} \\
k_e(\mathbf{s}) &= -s_e \le 0 \quad \forall e \in \{1, \dots, m\} \label{eq:kkt_c3}
\end{align}
where $\ell(f_\theta, \mathbf{s})$ is the TaskNet loss, $m=|\mathcal{E}'|$, and $\rho \in (0, 1]$ is the sparsity budget. We assume $\ell(f_\theta, \mathbf{s})$ is continuously differentiable with respect to $\mathbf{s}$.

To analyze the solution $\mathbf{s}^*$, we formulate the Karush-Kuhn-Tucker (KKT) conditions. It is standard to formulate KKT conditions for minimization problems. We consider the equivalent minimization problem:
\[
\min_{\mathbf{s} \in \mathbb{R}^m} \quad -\ell(f_\theta, \mathbf{s})
\]
subject to the same constraints \eqref{eq:kkt_c1}, \eqref{eq:kkt_c2}, \eqref{eq:kkt_c3}.

Let $\lambda \ge 0$ be the Lagrange multiplier for the sparsity constraint $g_1(\mathbf{s}) \le 0$.
Let $\mu_e \ge 0$ be the Lagrange multipliers for the upper bound constraints $h_e(\mathbf{s}) \le 0$.
Let $\nu_e \ge 0$ be the Lagrange multipliers for the lower bound constraints $k_e(\mathbf{s}) \le 0$.

The Lagrangian function is:
\begin{align*}
\mathcal{L}(\mathbf{s}, \lambda, \boldsymbol{\mu}, \boldsymbol{\nu}) &= -\ell(f_\theta, \mathbf{s}) + \lambda g_1(\mathbf{s}) + \sum_{e=1}^m \mu_e h_e(\mathbf{s}) + \sum_{e=1}^m \nu_e k_e(\mathbf{s}) \\
&= -\ell(f_\theta, \mathbf{s}) + \lambda \left( \frac{1}{m} \sum_{e=1}^m s_e - \rho \right) + \sum_{e=1}^m \mu_e (s_e - 1) + \sum_{e=1}^m \nu_e (-s_e)
\end{align*}

The KKT conditions for an optimal solution $\mathbf{s}^*$ (assuming constraint qualifications like Slater's condition hold, which is true here as $\mathbf{s}=\mathbf{0}$ is strictly feasible for $g_1$ if $\rho>0$) are:

\begin{enumerate}
    \item \textbf{Stationarity:} The gradient of the Lagrangian with respect to $\mathbf{s}$ must be zero at $\mathbf{s}^*$:
    \[
    \nabla_{\mathbf{s}} \mathcal{L}(\mathbf{s}^*, \lambda^*, \boldsymbol{\mu}^*, \boldsymbol{\nu}^*) = \mathbf{0}
    \]
    For each component $s_e$, this gives:
    \[
    -\frac{\partial \ell(f_\theta, \mathbf{s}^*)}{\partial s_e} + \lambda^* \left(\frac{1}{m}\right) + \mu_e^* (1) + \nu_e^* (-1) = 0
    \]
    Rearranging:
    \begin{equation}
    \label{eq:kkt_stationarity}
    \frac{\partial \ell(f_\theta, \mathbf{s}^*)}{\partial s_e} = \frac{\lambda^*}{m} + \mu_e^* - \nu_e^* \quad \forall e \in \{1, \dots, m\}
    \end{equation}

    \item \textbf{Primal Feasibility:} The solution $\mathbf{s}^*$ must satisfy all original constraints:
    \begin{align}
    \frac{1}{m} \sum_{e=1}^m s_e^* - \rho &\le 0 \label{eq:kkt_pf1} \\
    s_e^* - 1 &\le 0 \quad \forall e \label{eq:kkt_pf2} \\
    -s_e^* &\le 0 \quad \forall e \label{eq:kkt_pf3}
    \end{align}

    \item \textbf{Dual Feasibility:} All Lagrange multipliers must be non-negative:
    \begin{equation}
    \label{eq:kkt_df}
    \lambda^* \ge 0, \quad \mu_e^* \ge 0, \quad \nu_e^* \ge 0 \quad \forall e
    \end{equation}

    \item \textbf{Complementary Slackness:} The product of each multiplier and its corresponding constraint value must be zero at $\mathbf{s}^*$:
    \begin{align}
    \lambda^* \left( \frac{1}{m} \sum_{e=1}^m s_e^* - \rho \right) &= 0 \label{eq:kkt_cs1} \\
    \mu_e^* (s_e^* - 1) &= 0 \quad \forall e \label{eq:kkt_cs2} \\
    \nu_e^* (-s_e^*) = 0 \quad &\Leftrightarrow \quad \nu_e^* s_e^* = 0 \quad \forall e \label{eq:kkt_cs3}
    \end{align}
\end{enumerate}

It is important to note that because the inner maximization problem \eqref{eq:kkt_prob} (maximizing $\ell(f_\theta, \mathbf{s})$) is generally non-convex with respect to $\mathbf{s}$ (due to the non-linear GNN operations), the KKT conditions characterize necessary conditions for local optima or stationary points. They do not guarantee that a point satisfying them is the global maximizer of the MaskNet's objective. During the adversarial training (Algorithm \ref{alg:edgemask_dg_star}), the MaskNet's optimization step (ascent) aims to find such a point, typically approximating a local maximum but the above analysis provides a theoretical justification for the algorithm's objective.

\subsection{Interpretation of KKT Conditions}
We analyze the stationarity condition \eqref{eq:kkt_stationarity} based on the optimal value $s_e^*$ using the complementary slackness conditions \eqref{eq:kkt_cs2} and \eqref{eq:kkt_cs3}. Let $g_e^* = \frac{\partial \ell(f_\theta, \mathbf{s}^*)}{\partial s_e}$ denote the gradient of the loss with respect to the mask value of edge $e$ at the optimum.

\begin{itemize}
    \item \textbf{Case 1: Interior solution ($0 < s_e^* < 1$)} \\
        From \eqref{eq:kkt_cs3}, since $s_e^* > 0$, we must have $\nu_e^* = 0$. \\
        From \eqref{eq:kkt_cs2}, since $s_e^* < 1$ (i.e., $s_e^* - 1 < 0$), we must have $\mu_e^* = 0$. \\
        Substituting $\mu_e^*=0$ and $\nu_e^*=0$ into the stationarity condition \eqref{eq:kkt_stationarity}:
        \[
        g_e^* = \frac{\partial \ell(f_\theta, \mathbf{s}^*)}{\partial s_e} = \frac{\lambda^*}{m}
        \]
        \textit{Interpretation:} For edges that are partially masked, the marginal increase in loss obtained by slightly increasing the mask value $s_e$ must exactly equal the marginal ``cost'' imposed by the sparsity constraint, represented by $\lambda^*/m$.

    \item \textbf{Case 2: Fully masked edge ($s_e^* = 0$)} \\
        From \eqref{eq:kkt_cs2}, $\mu_e^* (0 - 1) = -\mu_e^* = 0$, which implies $\mu_e^* = 0$. \\
        Condition \eqref{eq:kkt_cs3} ($\nu_e^* s_e^* = 0$) is automatically satisfied, and we only know $\nu_e^* \ge 0$. \\
        Substituting $\mu_e^*=0$ into the stationarity condition \eqref{eq:kkt_stationarity}:
        \[
        g_e^* = \frac{\partial \ell(f_\theta, \mathbf{s}^*)}{\partial s_e} = \frac{\lambda^*}{m} - \nu_e^*
        \]
        Since $\nu_e^* \ge 0$, this implies:
        \[
        \frac{\partial \ell(f_\theta, \mathbf{s}^*)}{\partial s_e} \le \frac{\lambda^*}{m}
        \]
        \textit{Interpretation:} An edge is completely removed ($s_e^*=0$) if the marginal gain in loss from slightly increasing its mask value is less than or equal to the marginal sparsity cost $\lambda^*/m$. Including this edge even partially is not ``worth'' the sparsity cost it incurs.

    \item \textbf{Case 3: Fully included edge ($s_e^* = 1$)} \\
        From \eqref{eq:kkt_cs3}, $\nu_e^* (1) = 0$, which implies $\nu_e^* = 0$. \\
        Condition \eqref{eq:kkt_cs2} ($\mu_e^* (s_e^* - 1) = 0$) is automatically satisfied, and we only know $\mu_e^* \ge 0$. \\
        Substituting $\nu_e^*=0$ into the stationarity condition \eqref{eq:kkt_stationarity}:
        \[
        g_e^* = \frac{\partial \ell(f_\theta, \mathbf{s}^*)}{\partial s_e} = \frac{\lambda^*}{m} + \mu_e^*
        \]
        Since $\mu_e^* \ge 0$, this implies:
        \[
        \frac{\partial \ell(f_\theta, \mathbf{s}^*)}{\partial s_e} \ge \frac{\lambda^*}{m}
        \]
        \textit{Interpretation:} An edge is fully kept ($s_e^*=1$) if the marginal gain in loss from increasing its mask value (evaluated at $s_e=1$) is greater than or equal to the marginal sparsity cost $\lambda^*/m$. The contribution of this edge to maximizing the loss outweighs its sparsity cost.
\end{itemize}

Furthermore, the complementary slackness condition \eqref{eq:kkt_cs1} tells us about the optimal Lagrange multiplier $\lambda^*$ for the sparsity constraint:
\begin{itemize}
    \item If the sparsity constraint is inactive at the optimum (i.e., $\frac{1}{m} \sum s_e^* < \rho$), then we must have $\lambda^* = 0$. In this scenario, the sparsity budget is not limiting, and the optimal mask $\mathbf{s}^*$ is determined solely by maximizing $\ell(f_\theta, \mathbf{s})$ within the box $[0, 1]^m$. The conditions simplify: $g_e^* = \mu_e^* - \nu_e^*$.
    \item If the sparsity constraint is active at the optimum (i.e., $\frac{1}{m} \sum s_e^* = \rho$), then $\lambda^*$ can be positive ($\lambda^* \ge 0$). In this case, $\lambda^*/m$ acts as a non-zero threshold on the loss gradient $\frac{\partial \ell}{\partial s_e}$ required to include an edge ($s_e^* > 0$). A higher $\lambda^*$ (resulting from a tighter budget $\rho$) imposes a stricter requirement for including edges.
\end{itemize}

\section{Gradient Analysis of the Adversarial Game} 
\label{app:gradient_analysis}

We analyze the gradient dynamics of the alternating optimization procedure used in Algorithm \ref{alg:edgemask_dg_star} to understand how the interplay between the TaskNet ($f_\theta$) and MaskNet ($m_{\phi, \psi}$) encourages robustness. Let $\mathcal{L}_{cls}(\theta, \phi, \psi)$ denote the classification loss for a given graph instance, making the parameter dependencies explicit:
\[
\mathcal{L}_{cls}(\theta, \phi, \psi) = \mathcal{L}_{cls}(f_\theta(\mathbf{X}, \mathcal{G}', \mathbf{s}), \mathbf{Y}), \quad \text{where } \mathbf{s} = m_{\phi, \psi}(\mathbf{X}, \mathcal{E}').
\]
Let $R(\phi, \psi)$ denote the sparsity regularization term:
\[
R(\phi, \psi) = \lambda \cdot \text{mean}(\mathbf{s}) = \frac{\lambda}{m} \|\mathbf{s}\|_1 = \frac{\lambda}{m} \sum_{e=1}^m s_e, \quad \text{where } \mathbf{s} = m_{\phi, \psi}(\mathbf{X}, \mathcal{E}').
\]

\subsection{TaskNet Gradient Update}
The TaskNet aims to minimize the classification loss given the current fixed mask $\mathbf{s}$ generated by the MaskNet. The objective for the TaskNet update step is:
\[
J_{task}(\theta) = \mathcal{L}_{cls}(\theta, \phi_{\text{fixed}}, \psi_{\text{fixed}})
\]
The gradient used for updating the TaskNet parameters $\theta$ is:
\begin{equation}
\label{eq:grad_task}
\mathbf{g}_\theta = \nabla_\theta J_{task}(\theta) = \nabla_\theta \mathcal{L}_{cls}(f_\theta(\mathbf{X}, \mathcal{G}', \mathbf{s}), \mathbf{Y})
\end{equation}
where $\mathbf{s} = m_{\phi_{\text{fixed}}, \psi_{\text{fixed}}}(\cdot)$ is treated as a constant input (edge attributes) during this computation. This is a standard gradient computation for the GAT model $f_\theta$ operating on the graph $\mathcal{G}'$ with edge attributes $\mathbf{s}$. The update rule is $\theta \leftarrow \theta - \eta_\theta \mathbf{g}_\theta$. This step adapts the TaskNet to perform well on the graph structure as perturbed by the current adversarial mask.

\subsection{MaskNet Gradient Update}
The MaskNet aims to find mask parameters $(\phi, \psi)$ that maximize the TaskNet's loss (for fixed $\theta$) while minimizing the average mask value (promoting sparsity). This is formulated as minimizing the following objective:
\[
J_{mask}(\phi, \psi) = -\mathcal{L}_{cls}(\theta_{\text{fixed}}, \phi, \psi) + R(\phi, \psi)
\]
The gradient used for updating the MaskNet parameters $(\phi, \psi)$ is $\mathbf{g}_{\phi, \psi} = \nabla_{\phi, \psi} J_{mask}(\phi, \psi)$. We compute this gradient using the chain rule. Let $\mathbf{p} = (\phi, \psi)$ denote the combined MaskNet parameters.
\[
\mathbf{g}_{\phi, \psi} = \nabla_{\mathbf{p}} J_{mask}(\phi, \psi) = \nabla_{\mathbf{p}} [-\mathcal{L}_{cls}(\theta_{\text{fixed}}, \phi, \psi)] + \nabla_{\mathbf{p}} [R(\phi, \psi)]
\]

\textbf{Term 1: Gradient of Negative Loss}
The loss $\mathcal{L}_{cls}$ depends on $\mathbf{p}$ through the mask $\mathbf{s} = m_{\mathbf{p}}(\cdot)$.
\[
\nabla_{\mathbf{p}} [-\mathcal{L}_{cls}] = - \nabla_{\mathbf{p}} [\mathcal{L}_{cls}]
\]
Applying the chain rule:
\[
\nabla_{\mathbf{p}} [\mathcal{L}_{cls}] = \frac{\partial \mathcal{L}_{cls}}{\partial \mathbf{s}} \frac{\partial \mathbf{s}}{\partial \mathbf{p}}
\]
Here:
\begin{itemize}
    \item $\frac{\partial \mathcal{L}_{cls}}{\partial \mathbf{s}} = \nabla_{\mathbf{s}} \mathcal{L}_{cls}$ is the gradient of the TaskNet's loss with respect to the mask vector $\mathbf{s}$. This is a row vector of size $1 \times m$. Its $e$-th element, $\frac{\partial \mathcal{L}_{cls}}{\partial s_e}$, represents the sensitivity of the loss to the mask value of edge $e$.
    \item $\frac{\partial \mathbf{s}}{\partial \mathbf{p}} = \nabla_{\mathbf{p}} \mathbf{s}$ is the Jacobian matrix of the MaskNet output $\mathbf{s}$ with respect to its parameters $\mathbf{p}$. Its size is $m \times |\mathbf{p}|$. The $(e, j)$-th element is $\frac{\partial s_e}{\partial p_j}$.
\end{itemize}
The product is a row vector of size $1 \times |\mathbf{p}|$, representing the gradient of the loss with respect to the MaskNet parameters.
\[
\nabla_{\mathbf{p}} [-\mathcal{L}_{cls}] = - (\nabla_{\mathbf{s}} \mathcal{L}_{cls}) (\nabla_{\mathbf{p}} \mathbf{s})
\]

\textbf{Term 2: Gradient of Sparsity Regularizer}
\[
R(\phi, \psi) = \frac{\lambda}{m} \sum_{e=1}^m s_e(\mathbf{p})
\]
\[
\nabla_{\mathbf{p}} [R(\phi, \psi)] = \frac{\lambda}{m} \nabla_{\mathbf{p}} \left[ \sum_{e=1}^m s_e(\mathbf{p}) \right] = \frac{\lambda}{m} \sum_{e=1}^m \nabla_{\mathbf{p}} [s_e(\mathbf{p})]
\]
Using Jacobian notation:
\[
\sum_{e=1}^m \nabla_{\mathbf{p}} [s_e(\mathbf{p})] = \mathbf{1}^T (\nabla_{\mathbf{p}} \mathbf{s})
\]
where $\mathbf{1}$ is an $m \times 1$ column vector of ones.
So,
\[
\nabla_{\mathbf{p}} [R(\phi, \psi)] = \frac{\lambda}{m} \mathbf{1}^T (\nabla_{\mathbf{p}} \mathbf{s})
\]

\textbf{Combined MaskNet Gradient:}
Substituting the terms back:
\begin{align}
\mathbf{g}_{\phi, \psi} &= \nabla_{\mathbf{p}} J_{mask}(\phi, \psi) \nonumber \\
&= - (\nabla_{\mathbf{s}} \mathcal{L}_{cls}) (\nabla_{\mathbf{p}} \mathbf{s}) + \frac{\lambda}{m} \mathbf{1}^T (\nabla_{\mathbf{p}} \mathbf{s}) \nonumber \\
&= \left( - \nabla_{\mathbf{s}} \mathcal{L}_{cls} + \frac{\lambda}{m} \mathbf{1}^T \right) (\nabla_{\mathbf{p}} \mathbf{s}) \label{eq:grad_mask}
\end{align}
The update rule for the MaskNet is $\mathbf{p} \leftarrow \mathbf{p} - \eta_{\phi\psi} \mathbf{g}_{\phi, \psi}^T$ (transposing the row gradient to a column vector for parameter updates).

\subsection{Interpretation of Gradient Dynamics}

The gradient update for the MaskNet parameters $\mathbf{p}=(\phi, \psi)$ is driven by two main components, filtered through the MaskNet's Jacobian $\nabla_{\mathbf{p}} \mathbf{s}$:

1.  \textbf{Loss Sensitivity ($-\nabla_{\mathbf{s}} \mathcal{L}_{cls}$):} This term pushes the MaskNet parameters to change the mask $\mathbf{s}$ in a direction that increases the TaskNet loss $\mathcal{L}_{cls}$. Specifically, if $\frac{\partial \mathcal{L}_{cls}}{\partial s_e}$ is positive (increasing $s_e$ increases loss), the gradient term $-\frac{\partial \mathcal{L}_{cls}}{\partial s_e}$ is negative, encouraging changes in $\mathbf{p}$ that lead to a decrease in $s_e$. Conversely, if $\frac{\partial \mathcal{L}_{cls}}{\partial s_e}$ is negative (increasing $s_e$ decreases loss), this term encourages an increase in $s_e$. The MaskNet learns to identify edges where the TaskNet is sensitive.

2.  \textbf{Sparsity Pressure ($\frac{\lambda}{m} \mathbf{1}^T$):} This term provides a constant positive pressure in the objective gradient (or negative pressure in the parameter update direction $-\mathbf{g}_{\phi, \psi}$) for all mask values $s_e$, scaled by $\lambda/m$. It encourages the MaskNet parameters to change in ways that decrease all $s_e$ values, promoting overall sparsity.

The MaskNet parameters $\mathbf{p}=(\phi, \psi)$ are updated to minimize $J_{mask}(\phi, \psi) = -\mathcal{L}_{cls}(\theta_{\text{fixed}}, \phi, \psi) + R(\phi, \psi)$. The gradient $\mathbf{g}_{\phi, \psi}$ is given by Equation \eqref{eq:grad_mask}. Since Algorithm \ref{alg:edgemask_dg_star} employs an optimizer that minimizes $J_{mask}$, the parameter update is $\mathbf{p} \leftarrow \mathbf{p} - \eta_{\phi\psi} \mathbf{g}_{\phi, \psi}^T$. 

To understand how this update affects individual mask values $s_e$, consider the term $Q_e = -\frac{\partial \mathcal{L}_{cls}}{\partial s_e} + \frac{\lambda}{m}$, which is the $e$-th component of the vector $\left( - \nabla_{\mathbf{s}} \mathcal{L}_{cls} + \frac{\lambda}{m} \mathbf{1}^T \right)$ in Equation \eqref{eq:grad_mask}. The change in $s_e$ is driven by this $Q_e$ through the MaskNet's Jacobian $\nabla_{\mathbf{p}} \mathbf{s}$. Specifically, because the MaskNet uses a sigmoid output for $s_e$ (Equation \ref{eq:mask_gen}), the entries of the Jacobian term $\frac{\partial s_e}{\partial (\text{pre-sigmoid activation for } s_e)}$ are non-negative. Assuming the parameters $\mathbf{p}$ primarily influence $s_e$ through its pre-sigmoid activation in a way that $\frac{\partial s_e}{\partial \mathbf{p}}$ components (relevant to $s_e$) don't invert the sign of $Q_e$'s influence, the effective change $\Delta s_e$ will have a sign opposite to $Q_e$. 
Thus, the MaskNet will tend to:
\begin{itemize}
    \item \textbf{Increase $s_e$} if $Q_e < 0$, which means $\frac{\partial \mathcal{L}_{cls}}{\partial s_e} > \frac{\lambda}{m}$ (the loss increase from $s_e$ outweighs the sparsity cost, so making $s_e$ larger helps minimize $-\mathcal{L}_{cls}$).
    \item \textbf{Decrease $s_e$} if $Q_e > 0$, which means $\frac{\partial \mathcal{L}_{cls}}{\partial s_e} < \frac{\lambda}{m}$ (the loss increase from $s_e$ is less than the sparsity cost, or $s_e$ helps decrease loss, so making $s_e$ smaller helps minimize $J_{mask}$).
\end{itemize}
This aligns with the MaskNet's objective of minimizing $J_{mask}(\phi, \psi) = -\mathcal{L}_{cls}(\theta_{\text{fixed}}, \phi, \psi) + R(\phi, \psi)$.
(Note: If using a hard-concrete gate or other mechanisms where the Jacobian $\partial \mathbf{s} / \partial \mathbf{p}$ might have more complex sign interactions, this direct interpretation would need adaptation.)

\textbf{Interaction Driving Robustness (Hypothesis):}
The adversarial process unfolds as follows:
\begin{enumerate}
    \item The MaskNet identifies edges $e$ where the current TaskNet $f_\theta$ is vulnerable (high positive $\frac{\partial \mathcal{L}_{cls}}{\partial s_e}$) or edges that are not sufficiently beneficial (low negative $\frac{\partial \mathcal{L}_{cls}}{\partial s_e}$) relative to the sparsity cost $\lambda/m$. It decreases the corresponding $s_e$ values.
    \item The TaskNet receives this challenging mask $\mathbf{s}$ and updates its parameters $\theta$ via $\nabla_\theta \mathcal{L}_{cls}$ to minimize the loss despite the mask.
    \item This adaptation by the TaskNet changes its internal representations and consequently alters the loss sensitivity $\nabla_{\mathbf{s}} \mathcal{L}_{cls}$ for the next MaskNet update.
\end{enumerate}
We hypothesize that domain-specific, spurious edges exhibit high positive $\frac{\partial \mathcal{L}_{cls}}{\partial s_e}$ for a non-robust TaskNet. The MaskNet consistently targets these edges for down-weighting. The TaskNet, to survive, must adapt $\theta$ such that its predictions become less dependent on these specific edges, thereby reducing the magnitude of $\frac{\partial \mathcal{L}_{cls}}{\partial s_e}$ for those spurious edges. This forces the TaskNet to rely more heavily on other structural patterns, potentially the domain-invariant ones, whose presence consistently leads to a decrease in loss (negative $\frac{\partial \mathcal{L}_{cls}}{\partial s_e}$) even under perturbation. The gradient dynamics thus provide a mechanism pushing the TaskNet towards representations robust to the removal of likely domain-specific structural features identified by the MaskNet.

\section{Additional Ablation Studies: Hyperparameters for Graph Enrichment} 
\label{sec:appendix_ablation_enrichment}

This section details further ablation studies on hyperparameters related to the graph enrichment process (kNN and Spectral clustering edges) used in  {EdgeMask-DG*}. All experiments, unless otherwise specified, use the default hyperparameters outlined in the main paper. Leave-one-out cross-validation across the three citation datasets (`acmv9', `citationv1', `dblpv7') is used for these hyperparameter sweeps. (Results are from single runs; std. dev. over multiple seeds would be beneficial for robustness assessment.)

\subsection{kNN K Ablation}
\label{subsec:knn_k_ablation_appendix}

We study the effect of varying the number of neighbors (`knnk') used for constructing the kNN feature graph within our full  {EdgeMask-DG*(Spec+kNN)} method. The spectral K is fixed at 100. Average results over the three leave-one-out scenarios are presented in Table~\ref{tab:knn_k_ablation_appendix} and Figure~\ref{fig:knn_k_ablation_appendix}.

\begin{table}[htbp] 
\centering
\caption{kNN K ablation results (average over 3 Scenarios). Performance metrics (Avg Micro-F1, Avg Macro-F1) are reported. The default value (k=10) and the best average performance (k=3) are highlighted.}
\label{tab:knn_k_ablation_appendix}
\sisetup{round-mode=places, round-precision=4, table-format=1.4} 
\begin{tabular}{@{} S[table-format=2.0] S S @{}}
\toprule
{kNN K} & {Avg Micro-F1} & {Avg Macro-F1} \\
\midrule
\bfseries 3  & \bfseries 0.7383 & \bfseries 0.7260 \\ 
5  & 0.7263 & 0.7115 \\
\bfseries 10 & 0.7307 & 0.7175 \\ 
20 & 0.6871 & 0.6428 \\
50 & 0.6362 & 0.5166 \\
\bottomrule
\end{tabular}
\end{table}

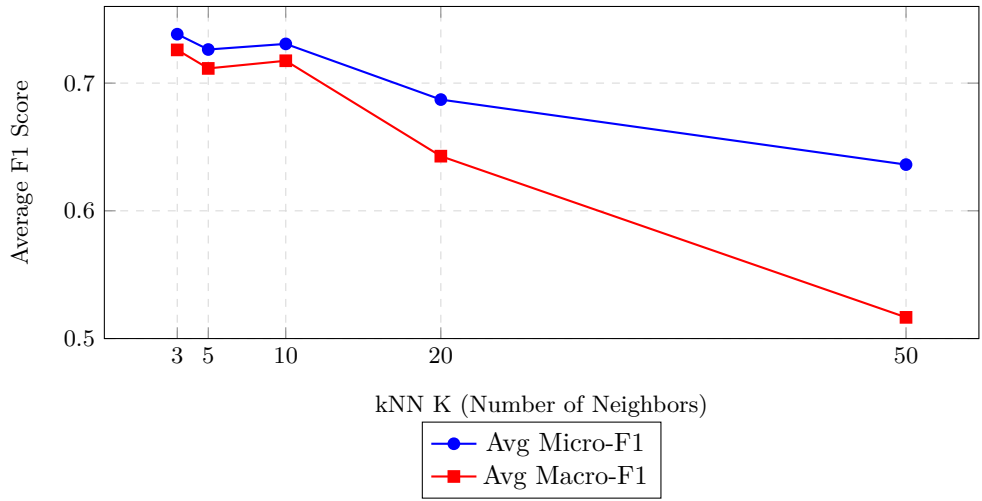
\begin{figure}[htbp] 
\centering
\begin{tikzpicture}
    \begin{axis}[
        main plot style,
        xlabel={kNN K (Number of Neighbors)},
        ylabel={Average F1 Score},
        xtick={3, 5, 10, 20, 50},
        legend style={at={(0.5,-0.25)}, anchor=north},
        ymin=0.5, ymax=0.76, 
        ]
        \addplot [blue, mark=*, thick] table [x=k, y=micro_f1, col sep=comma] {
            k, micro_f1
            3, 0.7383
            5, 0.7263
            10, 0.7307
            20, 0.6871
            50, 0.6362
        };
        \addlegendentry{Avg Micro-F1}

        \addplot [red, mark=square*, thick] table [x=k, y=macro_f1, col sep=comma] {
            k, macro_f1
            3, 0.7260
            5, 0.7115
            10, 0.7175
            20, 0.6428
            50, 0.5166
        };
        \addlegendentry{Avg Macro-F1}
    \end{axis}
\end{tikzpicture}
\caption{Average performance vs. kNN K (number of neighbors). Results averaged over 3 leave-one-out scenarios.}
\label{fig:knn_k_ablation_appendix}
\end{figure}

\textbf{Analysis:} The choice of `knnk' significantly impacts performance. Figure~\ref{fig:knn_k_ablation_appendix} shows a clear trend: performance is best for small values of K (peaking at K=3 on average), remains relatively good at K=10 (our default), but drops sharply for larger values (K=20 and K=50). The particularly steep decline in Macro-F1 for K=50 suggests that including too many neighbors introduces excessive noise or spurious connections, potentially linking nodes across class boundaries and harming the classification of minority classes. This indicates that the kNN edges are most beneficial when they capture strong, local similarities based on node features. While K=3 yields the best average, K=10 provides a good balance and strong performance, justifying its use as a default, although smaller values might offer further improvements.

\subsection{Spectral K Ablation}
\label{subsec:spectral_k_ablation_appendix}

We analyze the effect of varying the number of clusters (`spectralk') used for generating spectral feature edges in our full  {EdgeMask-DG*(Spec+kNN)} method. The kNN K is fixed at 10. Average results over the three leave-one-out scenarios, including training time, are presented in Table~\ref{tab:spectral_k_ablation_appendix} and Figure~\ref{fig:spectral_k_ablation_appendix}.

\begin{table}[htbp] 
\centering
\caption{Spectral K ablation results (average over 3 Scenarios, except k=50). Performance metrics (Avg Micro-F1, Avg Macro-F1) and average training time (\si{\second}) are reported. The default (k=100) and best performing (k=200) values are highlighted.}
\label{tab:spectral_k_ablation_appendix}
\sisetup{round-mode=places, round-precision=4, table-format=1.4} 
\begin{tabular}{@{} S[table-format=3.0] S S S[table-format=3.2] @{}}
\toprule
{Spectral K} & {Avg Micro-F1} & {Avg Macro-F1} & {Avg Train Time (s)} \\
\midrule
50  & {0.7126} & {0.6551} & {393.30} \\ 
\bfseries 100 & 0.7463 & 0.7354 & 363.08 \\ 
150 & 0.7404 & 0.7268 & 323.64 \\
\bfseries 200 & \bfseries 0.7498 & \bfseries 0.7364 & \bfseries 309.32 \\ 
250 & 0.7457 & 0.7339 & 325.24 \\
300 & 0.7449 & 0.7301 & 318.88 \\
\bottomrule
\end{tabular}
\end{table}

\begin{figure}[htbp]
\centering
\begin{tikzpicture}
  \begin{axis}[
    width=0.8\textwidth,
    height=6cm,
    xlabel={Spectral K (Number of Clusters)},
    ylabel={Average F1 Score},
    xtick={100,150,200,250,300},
    ymin=0.72, ymax=0.76,
    yticklabel style={/pgf/number format/fixed, /pgf/number format/precision=2},
    legend style={at={(0.5,-0.3)}, anchor=north},
    ]
    \addplot[blue, mark=*, thick] table [x=k, y=micro_f1, col sep=comma] {
      k, micro_f1
      100, 0.7463
      150, 0.7404
      200, 0.7498
      250, 0.7457
      300, 0.7449
    };
    \addlegendentry{Avg Micro-F1}

    \addplot[red, mark=square*, thick] table [x=k, y=macro_f1, col sep=comma] {
      k, macro_f1
      100, 0.7354
      150, 0.7268
      200, 0.7364
      250, 0.7339
      300, 0.7301
    };
    \addlegendentry{Avg Macro-F1}
  \end{axis}

  \begin{axis}[
    width=0.8\textwidth,
    height=6cm,
    axis y line=right,
    axis x line=none,
    ylabel={Avg Train Time (\si{\second})},
    ymin=300, ymax=380,
    legend style={at={(0.5,-0.6)}, anchor=north},
    xtick=\empty,
    ]
    \addplot[green, mark=triangle*, thick, dashed] table [x=k, y=time, col sep=comma] {
      k, time
      100, 363.08
      150, 323.64
      200, 309.32
      250, 325.24
      300, 318.88
    };
    \addlegendentry{Avg Train Time}
  \end{axis}
\end{tikzpicture}
\caption{Average performance and training time vs.\ spectral $K$. F1 scores (left axis) and training time (right axis) averaged over three leave-one-out runs (note: $k=50$ omitted due to OOM).}
\label{fig:spectral_k_ablation_appendix}
\end{figure}
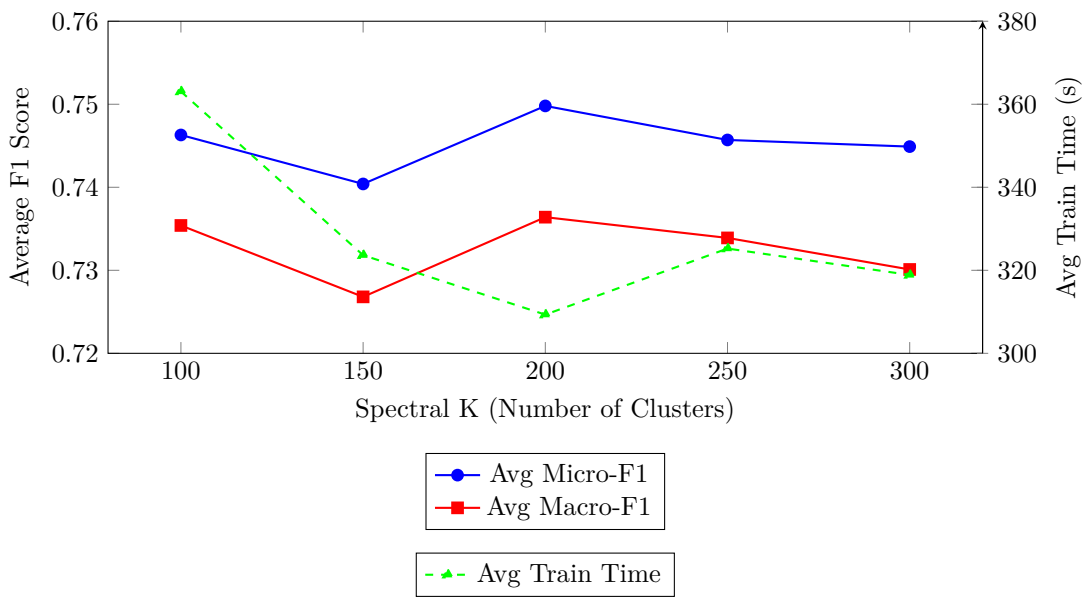

\end{document}

%% file: math_commands.tex
\usepackage{amsmath,amsfonts,bm}

\def\eqref#1{equation~\ref{#1}}

\def\1{\bm{1}}

\DeclareMathAlphabet{\mathsfit}{\encodingdefault}{\sfdefault}{m}{sl}
\SetMathAlphabet{\mathsfit}{bold}{\encodingdefault}{\sfdefault}{bx}{n}

